\newtheorem{assumption}[theorem]{Assumption}
\def\ms#1{\textcolor{blue}{#1}}
\def\mbb{\mathbb}
\def\mcal{\mathcal}
\def\reals{\mathbb{R}}
\def\ones{\boldsymbol{1}}
\def\abs#1{|#1|}
\begin{document}

\title{Scalable Computation of Causal Bounds}

\author{\name Madhumitha Shridharan \email ms6143@columbia.edu 
       \AND
       \name Garud Iyengar \email gi10@columbia.edu \\
       \addr Department of Industrial Engineering and Operations Research\\
       Columbia University\\ New York City, NY 10027, USA}

\editor{Jin Tian}

\maketitle

\begin{abstract}
We consider the problem of computing bounds for causal queries on causal graphs with unobserved confounders and discrete valued observed variables, where identifiability does not hold. Existing non-parametric approaches for computing such bounds use linear programming (LP) formulations that quickly become intractable for existing solvers because the size of the LP grows exponentially in the number of edges in the causal graph. We show that this LP can be significantly pruned, allowing us to compute bounds for significantly larger causal inference problems compared to existing techniques. This pruning procedure allows us to compute bounds in closed form for a special class of problems, including a well-studied family of problems where multiple confounded treatments influence an outcome. We extend our pruning methodology to fractional LPs which compute bounds for causal queries which incorporate additional observations about the unit. We show that our methods provide significant runtime improvement compared to benchmarks in experiments and extend our results to the finite data setting. For causal inference without additional observations, we propose an efficient greedy heuristic that produces high quality bounds, and scales to problems that are several orders of magnitude larger than those for which the pruned LP can be solved.
\end{abstract}
\begin{keywords}
Causal Bounds, Partial Identification of Causal Effects, Causal Bounds with Observations, Multi-Cause Setting with Unobserved Confounders, Linear Programming
\end{keywords}

\section{Introduction}
\label{sec:introduction}
We are interested in answering the following counterfactual query about a
large-scale system of discrete variables: What will be the value of some
outcome variables
\(V_O\) if we intervene on variables \(V_I\), given the values of variables \(V_A\) are
known? Several meaningful questions in data-rich environments can be
formulated this way. As an example, 
  consider 
the causal graph  in \Cref{fig:specialcasewithconfounders} that describes
the health outcome of a patient $Y$ as a function of a sequence of treatments administered $T_i$ by a physician, $i =
1,\ldots, 5$. The treatments chosen are functions of some patient characteristics $C_i$, $i = 1,2$,
e.g. sex and age. The variable $U_B$
refers to unknown variables, also known as confounders, that might impact
both the choice of treatments and the health outcome e.g. patient lifestyle, physician biases, etc. What will be the expected health outcome \(Y\) of the patient if she is administered treatments \(T_i, i=1,\ldots,5\), given her sex and age are known?~\citep{ranganath2019multiple, 
  janzing2018,pmlr-v89-d-amour19a,tran2017implicit}

Traditional approaches to estimate causal effects of interventions 
involve randomized control
trials (RCTs) in order to remove the impact of confounders. However,
running experiments to identify personalized 
interventions for sub-populations of units is 
often expensive
and
practically infeasible. 
Therefore, there is a push to develop techniques that can use
observational data 
that
is far more readily available.  

The challenge in observational studies is to account for  
unobserved confounders which can create
spurious correlations and adversely impact data-driven
decision-making~\citep{imbens2015causal,pearl2009causality}.  
For example, the unknown confounder $U_B$
in~\Cref{fig:specialcasewithconfounders} 
influences both the prescribed treatments
and the outcome: a patient who exercises regularly 
may
have lower body
weight, and thus, require lower dosage of treatments, 
but also have an improved response to treatments. Hence, treatment dosage can be negatively correlated with treatment response, although administering lower dosage of treatments need not result in improved response. 
Hence, alternate
methodologies need to be developed to compute the causal effect of
treatments on health outcomes in the presence of unobserved confounders.

\begin{figure}[t]
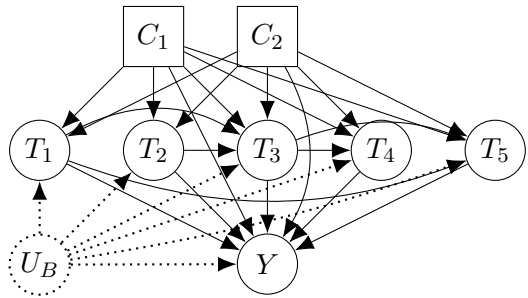


  \centering
  
  \begin{influence-diagram}
    \node (T1) [] {$T_{1}$};
    \node (T2) [right = of T1] {$T_{2}$};
    \node (T3) [right = of T2] {$T_{3}$};
    \node (T4) [right = of T3] {$T_{4}$};
    \node (T5) [right = of T4] {$T_{5}$};
    \node (Y) [below = of T3] {$Y$};
    \node (C1) [above = of T2, decision] {$C_1$};
    \node (C2) [above = of T3, decision] {$C_2$};
    \node (UB) [below = of T1, information] {$U_B$};

    \edge {T1} {Y};
    \edge {T2} {Y};
    \edge {T3} {Y};
    \edge {T4} {Y};
    \edge {T5} {Y};
    \edge {C1} {T1};
    \edge {C1} {T2};
    \edge {C1} {T3};
    \edge {C1} {T4};
    \edge {C1} {T5};
    \edge {C1} {Y};
    \edge {C2} {T1};
    \edge {C2} {T2};
    \edge {C2} {T3};
    \edge {C2} {T4};
    \edge {C2} {T5};
    \edge {T3} {T4}
    \edge {T2} {T3};
    
    \edge[information] {UB} {T1};
    \edge[information] {UB} {T2};
    \edge[information] {UB} {T3};
    \edge[information] {UB} {T4};
    \edge[information] {UB} {Y};
    
    \path (T1) edge[->, bend right=20] (T5);
    \path (T1) edge[->, bend left=30] (T3);
    \path (T3) edge[->, bend left=20] (T5);
    \path (C2) edge[->, bend left=30] (Y);
    \path (UB) edge[->, bend right=5, information] (T5);
  \end{influence-diagram}
  \caption{$C_1$ and $C_2$ denote the sex and age of the patient. These
    influence the treatments, $T_i, i= 1 \ldots 5$, which the patient is
    prescribed. Note that the prescription of one treatment can influence
    the prescription of another (e.g. \(T_3\) is prescribed to manage the
    side effects of \(T_1\)). Together, these variables influence $Y$, the health outcome of the patient. Unobserved confounder
    \(U_B\) denotes unobserved variables, e.g. patient lifestyle,
    physician biases, 
    that impact the choice of treatments, and the health outcome.}
  \label{fig:specialcasewithconfounders}
\end{figure}

While it is impossible to precisely identify causal effects in the presence of unobserved confounders, it is possible to obtain bounds on the \textit{causal query}, the causal effect of interest. There have been multiple such attempts to bound causal effects for small special graphs. \citet{evans2012} bound causal
effects in the special case where any two observed variables are neither
adjacent in the graph, nor share a latent parent. \citet{richardson2014}
bound the causal effect of a treatment on a parameter of interest by
invoking additional (untestable) assumptions and assess how inference
about the 
treatment effect changes as these assumptions are
varied. \citet{kilbertus2020} and \citet{bareinboim2021} develop algorithms to
compute causal bounds for extensions of the instrumental variable model in
a continuous 
setting. \citet{geiger2013} bound causal effects in a model under
specific parametric assumptions. \citet{finkelstein20a} develop a method
for obtaining bounds on causal parameters using rules of probability and
restrictions on counterfactuals implied by causal graphs.  

While fewer in number, there have also been attempts to bound causal
effects in large general graphs. \citet{poderini2020} propose techniques to
compute bounds in special large graphs with multiple instruments and
observed variables. \citet{finkelstein21b} propose a method for 
partial identification in a class of measurement error
models, and \citet{pmlr-v162-zhang22ab} and \citet{duarte2021} propose a polynomial programming based approach to solve general causal inference problems, but their procedure
is computationally intensive for large graphs. 

In this work, we extend the class of large graphs for which causal effects
can be efficiently bounded. In particular, we focus on a class of causal inference
problems where causal bounds can be obtained  
using linear 
programming~(LP)~\citep{balke94,bareinboimtransfer2017,pearl2009causality, sjolanderbounds2014}. Recently, \citet{sachs2021general} 
identified a large problem class for which LPs can be used to compute causal bounds, 
and have developed an 
algorithm for formulating the objective function and the constraints of the
corresponding LP. This problem class is a generalization of the instrumental variable setting, and is thus widely applicable. 
However, as we describe later, the size of the LP is exponential in the
number of edges in the causal graph, and therefore, the straightforward
formulation of the LP can be tractably solved only for very small causal graphs. 
In this work, we show how to use the structure of the causal query and the
underlying graph to significantly 
prune 
the LP, and as a
consequence, significantly increase the size of the graphs for which the 
LP method remains tractable. This work is the full version of~\citep{pmlr-v162-shridharan22a} and extends the pruning methodology to fractional linear programs that are used to compute bounds for causal inference problems with additional observations about the unit. Our main contributions are as follows:  
\begin{enumerate}[(a)]
\item In Section~\ref{sec:general,pruningLP} we show that the exponential
  number of variables in the LP used to 
    compute causal bounds can be 
  aggregated to reduce 
  the number of variables by several orders of magnitude
  without impacting the
  quality of the bound 
  -- compare
  \(\abs{R}\) with \(\abs{H}\) in Table~\ref{tab:sizetable}. 

\item Although we show the bounds can be computed by 
  solving 
  a much smaller LP, 
  we
  get this
  computational advantage only if the
  pruned LP can be constructed efficiently. In
  Section~\ref{sec:general,pruningLP} we 
  show that 
  the pruned LP can be constructed directly, i.e. without
  first constructing the original LP or iterating over its
  variables. These results critically leverage 
  the structure of the LP corresponding to a causal inference problem. In
  particular, they leverage the fact that all possible functions from
  the parents \(pa(V)\) to a variable \(V\) are admissible.
  
\item In Section~\ref{sec:closedform}
  we show that the structural
  results that help us construct the 
  pruned LP 
  lead to closed form 
  bounds 
  for a 
  class of causal inference problems. This class of problems includes as a
  special case the well-studied setting in
  \Cref{fig:specialcasewithconfounders} where multiple confounded
  treatments influence a 
  outcome. Moreover, we are able to compute these bounds even when there
  are causal relationships between the treatments. 

\item In Section~\ref{sec:general,observations} we extend our pruning methodology to compute bounds for causal queries with additional observations about the unit. In
  this setting, the bounds are computed using fractional LPs. We show that
  the fractional LP  can be converted
  into an LP, and then show how this LP can be pruned. 
\item In Section~\ref{sec:greedy}, we propose a simple greedy heuristic to
  compute approximate solutions for the pruned LPs when there are no additional
  observations. We show that this
  heuristic allows us to compute approximate bounds for much larger graphs with very minimal degradation in performance. 
\end{enumerate}
The organization of the rest of this paper is as follows. In 
Section~\ref{sec:general} we introduce our formalism. In
Section~\ref{sec:general,pruningLP} 
we introduce our main structural results for pruning the LPs. In
Section~\ref{sec:closedform} we show that the LP bounds can be
computed 
in closed form for a large class of problems 
and also discuss
an example of this class of
problems. In Section~\ref{sec:general,observations} we show how to incorporate additional observations about the unit in the query to compute updated bounds. In Section~\ref{sec:numericals} we report the results of numerical experiments
validating and extending the methods proposed in Sections~\ref{sec:general,pruningLP} and \ref{sec:general,observations}. In particular, we show the 
significant runtime improvement provided by our methods compared to benchmarks and extend all results in earlier sections to the finite data setting. In Section~\ref{sec:greedy} we introduce our greedy heuristic
and benchmark its performance. Section~\ref{sec:conclusion} discusses
possible extensions. 
\section{Causal Inference Problems}
\label{sec:general}
Let $G$ denote the causal graph. 
Let \(V_{1}, \ldots, V_{n}\) denote the variables in $G$ in
topologically sorted order, 
and
\(N = \{1,\ldots, n\}\) denote the set of indices for the
variables. We assume that each variable $V_i \in \{0,1\}$. Later in this
section we discuss why our proposed techniques automatically apply to the case where
$V_i$ takes discrete values. 
We use lower case letters for the values for the variables, and
the notation \(V_i = v_i\) denotes that the variable \(V_i\) takes the
value \(v_i \in \{0,1\}\). 
For any subset 
\(S \subseteq N\), we define 
$V_{S} := \{V_i: i \in S\}$,
and the notation 
$V_{S} = v_S$ denotes the variable  $V_i = v_i$, for all 
$i \in S$, 
for some 
\(v \in \{0,1\}^{|N|}\). 

We consider a class of ``partitioned" causal graphs with two sets of
variables, \(V_A\) and \(V_B\), where \(V_B\) topologically follow \(V_A\)
(\cite{sachs20}). The \(V_A\) variables 
represent contextual or demographic variables (e.g. gender and age of
a patient, past 
purchases, etc.) for a unit, and are always observed. For example, in
\Cref{fig:specialcasewithconfounders}, $V_A = \{C_i: i = 1, 2\}$, the 
patient characteristics.
The \(V_B\)
variables can be observed, 
intervened upon or are the outcomes of interest in a query. In \Cref{fig:specialcasewithconfounders} $V_B
 = \{Y\} \cup \{T_i: i = 1, \ldots, 5\}$.
\begin{assumption}[\cite{sachs2021general}]\label{ass:validGraph}
    The index set $N$ is partitioned into two sets
    $N = A \cup B$, where 
  \begin{enumerate}[1.]
  \item  \(V_B\) topologically follow \(V_A\),
  \item each variable in \(V_A\) has no parents, but is the parent of some variable in \(V_B\).
  \item $V_B$ variables can have a common unobserved confounder $U_B$, and 
  \item  no pair of variables \((V_i, V_j)\), where \(i \in A\) and \(j
    \in B\), can share an unobserved confounder. 
  \end{enumerate}
\end{assumption}
For example, in the causal graph in Figure \ref{fig:causalgraph},
$V_A = \{Z\}$ and $V_B = \{X,Y\}$. 

\begin{figure}[t]
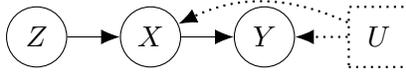

\begin{center}
  \begin{influence-diagram}
    \node (X) [] {$X$};
    \node (Y) [right = of X] {$Y$};
    \node (Z) [left = of X] {$Z$};
    \node (U) [right  = of Y, information, decision] {$U$};
    
    \edge {Z} {X};
    \edge {X} {Y};
    \path (U) edge[->, bend right=25, information] (X);
    \edge[information] {U} {Y};
  \end{influence-diagram}
\end{center}
\caption{Causal Graph for Running Example with query \(\mcal{Q} = \mathbb{P}(Y(X=1)=1|Z=1)\)}
\label{fig:causalgraph}
\end{figure}

We assume that the conditional probability distribution
$p_{v_B.v_A} = \mathbb{P}(V_B=v_B|V_A=v_A)$ is known. 
In Section~\ref{sec:finitedata} 
we extend our results to the finite data setting where the estimate
$\bar{p}_{v_B.v_A} \neq p_{v_B.v_A}$.  Our goal is to compute
bounds 
for 
counterfactual queries of the form  $\mcal{Q} = \mathbb{P}\big(V_{O}(V_{I} = q_I) = q_O|V_{A} = q_{A}\big)$, i.e. the probability of the outcome
$V_{O} = q_O$ after an intervention $do(V_{I} =
q_I)$ given contextual information for $V_A$.
This counterfactual is specific to an individual realization of \(V_A\),
and computes the outcome of \(V_O\) if \(V_I\) was set to \(q_I\). In
order to simplify our results we impose a restriction on 
$I$ that 
does not impact the query.
\begin{definition}[Critical variables for a query $\mcal{Q}$]
  Let $G^{do(V_I = q_I)}$ denote the mutilated graph after intervention
  \(do(V_I = q_I)\), i.e. variables $V_{I}$ no longer have any incoming
  arcs. 
 Then the  critical variables
  \(V_{C(\mcal{Q})}\) for the query $\mcal{Q}$ denote the set of variables in $V_A 
  \cup V_B$ that have a path to some variable in $V_O$ in  $G^{do(V_I = q_I)}$.
\end{definition}
\begin{assumption}[Valid Query]
  \label{ass:validQuery}
  The query $\mcal{Q} = \mathbb{P}\big(V_{O}(V_{I} = q_I) =
  q_O|V_A = q_A\big)$ satisfies the following conditions:
  \begin{enumerate}[(i)]
  \item $O$, $I \subseteq B$, with 
    \(O \cap I = 
    \emptyset\).
  \item All variables $V_I \subseteq V_{C(\mcal{Q})}$, i.e. all $V_I$
    variables are critical for the query.  \label{ass:critical-I}
  \end{enumerate}
\end{assumption}
Lemma~\ref{lem:critical} in \Cref{app:critical} establishes that \ref{ass:critical-I}
is without loss of generality since any $V_i \in V_I$ that is not
critical can be removed from the query. Note that the class of problems
defined in \cite{sachs2021general} allow the variables in \(V_A\) to have
parents, but also require every variable in 
\(V_A\) 
that
has a directed
path to some variable in \(V_B\) in \(G^{do(V_I = q_I)}\) to be intervened
upon, and so these variables therefore cannot have parents in \(G^{do(V_I = q_I)}\).
Thus, the 
assumption in~\cite{sachs2021general}  
is effectively equivalent to our 
assumption that 
variables in \(V_A\) do not have parents. Furthermore, we assume that every
variable in \(V_A\) is the  parent of some variable in \(V_B\), and provides
context in the query. Our assumption is more interpretable and broadly
applicable, yet maintains the expressibility of the
\cite{sachs2021general} formulation.




For graphs satisfying Assumption~\ref{ass:validGraph}, the unobserved
confounder $U_B$ can 
potentially be very  high dimensional 
with an 
unknown structure. 
We circumvent the difficulty of modeling $U_B$ directly, by instead
modeling the \emph{impact} of the confounder on the relationships between
the observed variables. For \(j \in B\), let $pa(V_j)$ denote the parents
of $V_j$ in the causal 
graph. Then  the variable \(V_j = 
f_j(pa(V_j), U_B)\) for some unknown but fixed function \(f_j:
\{0,1\}^{\abs{pa(V_j)}} \times \mathcal{U}_B \mapsto \{0,1\}\), where
$\mcal{U}_B$ denotes the domain of $U_B$. 
Therefore, the confounder \(U_B\) 
impacts the relationship between \(pa(V_j)\) and \(V_j\) by selecting a
function \(f_j \in \mcal{F}_j = \{f: f \text{ is a function from }
pa(V_j) \mapsto V_j\}\).
In the
causal graph in Figure~\ref{fig:causalgraph},  for each fixed value for the
unknown confounder \(U\), the variable \(X\) is a function of \(Z\); thus,
\(U\) effectively selects one function from the set $\mcal{F} =
\{f \text{ is a function from } Z \mapsto X\}$. Similarly, \(U\) selects one
function from the set \(\mcal{G} = \{g: g \text{ is a function from }  X
\mapsto Y\}\).

Since each variable \(V_k \in pa(V_j)\) takes values in
\(\{0,1\}\) and \(V_j \in \{0,1\}\), the cardinality of the set
\(\abs{\mcal{F}_j} = 2^{2^{\abs{pa(V_j)}}} \). Therefore, the elements of
\(\mcal{F}_j\) can be indexed by the set $
R_{V_j} = \{1, \ldots,
\abs{\mcal{F}_j}\}$. Let  \(f_j(\cdot, r_{V_j}): pa(V_j) \mapsto V_j\)
denote the \(r_{V_j}\)-th 
function 
in
\(\mcal{F}_j\). 
Let the set \(R = \prod_{j \in B} R_{V_j}\) index all possible mappings
from \(pa(V_j) \mapsto V_j\) for all \(j \in B\). Thus, the response
function variable \(r = (r_{V_{\abs{A}+1}}, \ldots r_{V_n}) \in R\) 
completely 
models the impact of \(U_B\), i.e.  the values of variables \(V_B\)  is a
deterministic function of \(V_A\) and \(r\). 
For example, in the causal
graph in Figure~\ref{fig:causalgraph}, 
it is easy to see that \(\abs{\mcal{F}}=\abs{\mcal{G}}=4\), and
the set 
\(R = \{(r_X, r_Y)= \{1, \ldots,
4\}^2\}\), where \(f_{r_X}\) denotes the \(r_X\)-th function from
\(\mcal{F}\) and \(g_{r_Y}\) denotes the \(r_Y\)-th function from
\(\mcal{G}\).  Note
that the cardinality \(\abs{R} = \prod_{j \in B} 2^{2^{|pa(V_j)|}}\) is 
exponential in the number of arcs in the causal graph.

Note that although we work with causal graphs with binary variables in
this paper, generalizing all subsequent  
results to categorical variables is straightforward. The critical property
that we exploit is that the set of response function variables index
possible mappings between variables. Therefore, our approach can be
extended to general categorical variables by suitably defining response
function variables. For example, suppose in Figure~\ref{fig:causalgraph},
\(X,Y \in \{0,\ldots,m\}\). Then the response function variable \(r_Y \in
\{0,\ldots,m^m-1\}\). All our results generalize to this more general
setting.

The unknown distribution over the high dimensional \(U_B\) can be
equivalently modeled via 
the distribution
\(\bm{q} \in \reals^{\abs{R}}_{+}\) over the set $R$. 
For \(r \in
R\), let \(F_T(V_S =
v_S, r)\) 
denote the value of \(V_T \subseteq V_B\) when \(V_S = v_S\) provided it is well
defined.
As discussed,  setting \(V_A = v_A\) and
choosing \(r \in R\)
completely defines the values for \(V_B\), i.e. \(F_B(V_A = v_A, r)\) is
well defined. 
Let  
\begin{equation}
  R_{v_B.v_A} = \{r: F_B(V_A = v_A,r) = v_B\}.
\end{equation}
 Hence, \(p_{v_B.v_A} =\sum_{r \in R_{v_B.v_A}} q_{r}\). For example, in
 the causal graph in Figure \ref{fig:causalgraph}, \(R_{xy.z} = \{(r_X,
 r_Y): f_{r_X}(z) = x, g_{r_Y}(x) =  y\}\) denotes the set of \(r\)-values
 that map \(z \mapsto (x,y)\). Hence, \(\mbb{P}(X=x, Y=y|Z=z)
 =\sum_{(r_X,r_Y) \in R_{xy.z}} q_{r_Xr_Y}\).  
 
The set 
 \begin{equation}
   \label{eq:rQdef}
   R_{\mathcal{Q}} = \big\{r \in R: F_O\big((V_{A},V_{I}) = (q_{A},q_{I}),r\big) = q_O\big\}
 \end{equation}
 denotes the set
 of \(r\) values consistent with the query \(\mcal{Q} =
 \mathbb{P}(V_O(V_I = q_I)=q_O|V_A = q_A)\). 
 Hence, \(\mathbb{P}(V_O(V_I = q_I)=q_O|V_A = q_A) =\sum_{r \in R_{\mcal{Q}}}
 q_{r}\). For the query \(\mcal{Q} = \mathbb{P}(Y(X=1)=1|Z=1)\) in the 
 causal graph in Figure \ref{fig:causalgraph}, the set
 \(R_{\mathcal{Q}} =
 \left\{(r_X, r_Y): g_{r_Y}(1) = 1\right\}\). 

Then lower and upper bounds for the causal query can be obtained by solving the following pair of linear programs (\cite{balke94,sachs2021general}): 
\begin{equation}
  \begin{array}{rl}
    \alpha_L / \alpha_U = \min_q / \max_q \ & \sum_{r \in R_{\mcal{Q}}} q_r\\
    \text{s.t.}\ & \sum_{r \in R_{v_B.v_A}} q_{r} = 
                   p_{v_B.v_A},\ \forall  v_A, v_B,\\ 
                                            & q\geq 0.
  \end{array}
  \label{eq:generalprimal}
\end{equation}
Recall for \(V_A = v_A\), \(r \in R\) uniquely determines the value of
\(V_B\). Hence, for fixed \(v_A\), \(\cup_{v_B} R_{v_B.v_A}\) is a partition of
\(R\). Thus, the constraint \(\sum_{r \in R} q_r = \sum_{v_B}\sum_{r \in
  R_{v_B.v_A}}q_r = \sum_{v_B} p_{v_B.v_A} = 1\) is
implied by the other constraints in the LP, and therefore, is not
explicitly added to the LP.


Note that our bounds are valid even if faithfulness assumptions are
violated~\citep{andersen2013expect}. However, our results only leverage
conditional independence relationships encoded in the graph, and not additional conditional
independence relationships in the data. Hence, if the data displays additional
independence relationships, our method is unable to leverage it to compute tighter
bounds. Note also that we do not impose any additional constraints on
the unobserved confounder.

\section{Pruning the LP}
\label{sec:general,pruningLP}
In this section, we show how to reduce the size of the 
LPs~\eqref{eq:generalprimal} by aggregating 
variables. Let $h: V_A \rightarrow V_B$ 
denote any function \(V_A \mapsto V_B\). We also 
refer to $h$ 
as a \emph{hyperarc} since it can be interpreted as an arc in a
hypergraph. We show how to reformulate LP~\eqref{eq:generalprimal}
into another equivalent LP with variables indexed by hyperarcs, instead of
response function variables. 
The 
number of possible
hyperarcs is $(2^{\abs{B}})^{2^{\abs{A}}} \ll \abs{R}$ (see
Table~\ref{tab:sizetable}). Hence, the new LP has at most
\((2^{\abs{B}})^{2^{\abs{A}}}\) variables, and is thus exponentially
smaller than LP~\eqref{eq:generalprimal}.
Furthermore, we show 
we only need to consider a smaller set of hyperarcs that are ``valid'' given 
the structure of the causal graph. 
 

Recall that \(F_B(V_A =
v_A, r)\) denotes the value of \(V_B\) when \(V_A = v_A\) and \(r \in R\). Let
\begin{equation}
  \label{eq:simpleHyperarcDef}
  R_h = \left\{r \in R: F_B(V_A = v_A, r) = h(v_A),\ \forall v_A \in \{0,1\}^{\abs{A}}\right\}
\end{equation}
denote the set of $r$ values consistent with the
hyperarc $h$, i.e. the set of \(r\)'s that map \(V_A=v_A\)
to 
\(V_B = h(v_A)\) 
for all inputs $v_A \in
\{0,1\}^{\abs{A}}$.
For the graph in Figure~\ref{fig:causalgraph},
\begin{equation*}
  R_h = \left\{ (r_X, r_Y) \in R: 
    \begin{array}{l}
      \big(f_{r_X}(0),g_{r_Y}(f_{r_X}(0))\big) = h(0)\\
      \big(f_{r_X}(1),g_{r_Y}(f_{r_X}(1))\big) = h(1)
    \end{array}
  \right\}
\end{equation*}
The causal graph structure implies that \(R_h \neq
\emptyset\) only for a 
subset of hyperarcs. 
\begin{definition}[Valid Hyperarc]
A hyperarc \(h\) is valid if $R_h \neq \emptyset$.
\end{definition}
In Section \ref{sec:validH}, we discuss why \(R_h \neq \emptyset \) only
for a subset of hyperarcs, and how to efficiently check the validity of a
hyperarc. We denote the set of valid hyperarcs by $H$. 

Next, we show how to write the LP in terms of variables $q_h = \sum_{r \in
  R_h} q_r$ corresponding
to hyperarcs $h \in H$ by aggregating variables $q_{r}$ for $r \in
R_h$. In Lemma~\ref{lemma:partition} in Appendix~\ref{app:critical} we
establish that \(R = \cup_{h \in H} R_h\) is a partition of 
\(R\).  
Therefore, 
\[
  \sum_{r \in R_{v_B.v_A}} q_{r}  =  \sum_{h \in H} \sum_{r \in R_h \cap
    R_{v_B.v_A}} q_r = \sum_{\{h \in H: h(v_A) = v_B\}} \Big[ \sum_{r \in
    R_h} q_r \Big] = \sum_{\{h \in H: h(v_A) = v_B\}}  q_h.
\]
Thus, the constraints in \eqref{eq:generalprimal} can all be formulated in
terms of the variables $q_h$ corresponding to hyperarcs.

Next, consider
the objective for the minimization LP:
\[
  \min\ \sum_{r \in R}  \ones\{r \in R_{\mcal{Q}}\} q_r
  =  \min\ \sum_{h \in H} \sum_{r \in R_h}
    \ones\{r \in R_{\mcal{Q}}\} q_r
\]
From the definition of $R_h$, it
follows that all $q_r$, \(r \in R_h\), 
have a coefficient $1$  in the same set of constraints, namely those indexed
by $\{(v_B = h(v_A), 
v_A): v_A \in \{0,1\}^{\abs{A}}\big\}$. 
Hence, for any fixed value 
$q_h$ for the variable corresponding to
hyperarc \(h\), any allocation in the set $\{[q_r]_{r \in R_h}: q_h = \sum_{r \in
  R_h} q_r, q_r \geq 0, r\in R_h\}$ 
is
feasible. Hence,
any 
optimal 
allocation
satisfies
\[
  \min\Big\{\sum_{r \in R_h} 
    \ones\{r \in R_{\mcal{Q}}\} q_r: \sum_{r \in R_h} q_r = q_h, q_r \geq 0, \forall r
  \in R_h\Big\}
  = \Big(\min_{r \in R_h} \ones\{r \in  R_{\mcal{Q}}\}\Big) q_h 
\]
Hence the lower bound LP 
can be reformulated as 
\begin{equation}
  \begin{array}{rll}
    \min_{q} \ & \sum_{h \in H} c^L_h q_h\\
    \text{s.t.}\ & \sum_{h \in H: h(v_A) = v_B} q_{h} = 
                   p_{v_B.v_A},  &\forall v_A, v_B, \\
    & q\geq 0,
  \end{array}
  \label{eq:prunedprimal}
\end{equation}
where
\begin{equation}
  \label{eq:cLdef}
  c^L_h := \min_{r \in R_h} \ones\{r \in  R_{\mcal{Q}}\} = \ones\{R_h \subseteq R_{\mcal{Q}}\}.
\end{equation}
The objective for the upper bound is given by
\[
  \max\ \sum_{r \in R}  \ones\{r \in R_{\mcal{Q}}\} q_r
  =  \max\ \sum_{h \in H} \sum_{r \in R_h}
  \ones\{r \in R_{\mcal{Q}}\} q_r = \max\ \sum_{h \in H} \Big(\max_{r \in
    R_h} \ones\{r \in R_{\mcal{Q}}\}\Big) q_h,
\]
where the second equality follows from an argument similar to the one used
to establish the objective for the lower bound LP.
Thus, 
 the 
upper bound LP 
is given by 
\begin{equation}
  \label{eq:prunedPrimalDualUpper}
  \begin{array}{rll}
  \max_q \ & \sum_{h \in H} c_h^U  q_h\\
  \mbox{s.t.} \ & \sum_{h \in H: h(v_A) = v_B} q_{h} = p_{v_B.v_A},  &\forall v_A, v_B\\ 
           & q\geq 0,
  \end{array}
\end{equation}
where
\begin{equation}
\label{eq:cUdef}
  c^U_h =  \max_{r \in R_h} \ones\{r \in R_{\mcal{Q}}\} = 
  \ones\{R_h
  \cap R_{\mcal{Q}} \neq~\emptyset\}.
\end{equation}
Both reformulations
have \emph{exponentially} fewer variables since $\abs{H} \ll \abs{R}$;
however, they are useful only 
if the set of valid hyperarcs \(H\) and the corresponding coefficients \(
\ones\{R_h \subseteq R_{\mcal{Q}}\}\)
and \(\ones\{R_h \cap R_{\mcal{Q}} \neq \emptyset\}\) can be
efficiently computed, i.e. in particular, without formulating the original LPs or
iterating over~\(R\). 
In Section~\ref{sec:validH} 
we describe how to efficiently check the validity of a hyperarc and
efficiently compute~$H$, and in Section~\ref{sec:computing chL}
(resp. Section~\ref{sec:computingchU}) we show how to efficiently compute
$c^L_h$ (resp. $c^U_h$). Finally, in Section~\ref{sec:finalprocedure}, we
describe our procedure that uses results established in 
Sections~\ref{sec:validH}, \ref{sec:computing chL} and
\ref{sec:computingchU}
to efficiently construct pruned LPs \eqref{eq:prunedprimal} and
\eqref{eq:prunedPrimalDualUpper} without formulating the original LPs or
iterating over \(R\). 

\subsection{Characterizing Valid Hyperarcs}
\label{sec:validH}
We now discuss why \(R_h \neq \emptyset\) only for a subset of hyperarcs.
We also show how to efficiently check the validity of hyperarc $h$ without
enumerating all values in the set \(R\) to check if there exists \(r \in
R\) such that \(F_B(V_A =v_A, r) = h(v_A)\) for all \(v_A \in
\{0,1\}^{|A|}\) i.e. $R_h \neq \emptyset$. Instead, we show that the
outputs of the function \(h\) alone are 
sufficient to determine its validity. We motivate the main result of this
section 
by first considering the simple causal 
graph in Figure~\ref{fig:causalgraph}. 
Consider a hyperarc \(h\) 
with
$h(0) = (x_0, y_0)$ and $h(1) = (x_1, y_1)$ for \(x_i, y_i \in
\{0,1\}\), \(i = 0, 1\). Note that for any choice of $x_i$ and $y_i$, $i
= 0, 1$, $h$ is a hyperarc from $Z \mapsto (X,Y)$. 
For \(h\) to be a valid hyperarc, it should be of the form $h(z) = (f_h(z), g_h(f_h(z)))$ 
for some $f_h \in \mcal{F}$ and $g_h \in \mcal{G}$. 
The
functions (if they exist) must satisfy:
\[
  f_h(z) = \begin{cases} 
    x_0 & \text{if } z=0 \\
    x_1 & \text{if } z=1
  \end{cases}
  \qquad 
  g_h(x) = \begin{cases} 
    y_0 & \text{if } x=x_0 \\
    y_1 & \text{if } x=x_1
  \end{cases}
\]
Clearly, $f_h$ is well defined 
for any choice of $x_i$, $i =
0, 1$. However, there exists 
\(g_h\) satisfying the conditions above
if, and
only if, \(y_0 =
y_1\)  
whenever
\(x_0 = x_1\). 
Hence, 
$h \not \in H$ 
if, and only if, 
$x_0=x_1$ but $y_0 \neq y_1$. For example, consider the hyperarc \(h_1\)
with \(h_1(0) = (0,0)\) and \(h_1(1) = (0,0)\). To check if \(h_1\) is
valid, we need to check if there exists well defined functions \(f_{h_1}:Z
\mapsto X\) and \(g_{h_1}:X \mapsto Y\) which satisfy: 
\[
  f_{h_1}(z) = \begin{cases} 
    0 & \text{if } z=0 \\
    0 & \text{if } z=1
  \end{cases}
  \qquad 
  g_{h_1}(x) = 
    0, \quad \text{if } x=0 \\
\]
Clearly, 
\(f_{h_1}\) is a well defined function. 
Any function
\(g: X \mapsto Y\) in \(\mcal{G}\) 
with $g(0) = 0$ satisfies the conditions for $g_{h_1}$. Hence, $h_1 \in
H$.
 
On the other hand, consider
the hyperarc \(h_2\) with \(h_2(0) = (0,0)\) and \(h_2(1) = (0,1)\). The hyperarc \(h_2\) is valid 
if there exists a
well defined function \(g_{h_2}:X \mapsto Y\) which satisfies: 
\[
  g_{h_2}(x) = \begin{cases} 
    0 & \text{if } x=0 \\
    1 & \text{if } x=0 \\
  \end{cases}
\]
Clearly, there cannot be such a function, and so \(h_2 \not \in H\).

Note that we did not have to 
iterate over the set \(R\) 
to check if a hyperarc \(h\) is valid. Instead, we recognize that a
hyperarc \(h\) only partially specifies a function mapping from $pa(V_j)$
to  
$V_j$. Therefore, in order to check the validity of \(h\), we only need to
check if there
exists some binary function $pa(V_j)  
\rightarrow V_j$ which satisfies this partial specification. Hence, the
outputs of the function \(h\) alone are sufficient to determine its
validity. The following theorem generalizes this observation to general
causal graphs.

\begin{restatable}[Valid hyperarcs]{theorem}{validityh}
\label{validityh}
Let $a_{A} \in \{0,1\}^{\abs{A}}$ denote the values set for the variables
$V_A$, and let 
$a_B = h(a_A) \in \{0,1\}^{\abs{B}}$ denote the
values for the variables $V_B$ when the hyperarc $h$ is evaluated at
$a_A$. A hyperarc $h$ is valid 
if, and only if, for
all
$\big(a_A, a_B = h(a_A)\big), \big(b_A, b_B = h(b_A)\big) \in
\{0,1\}^{|N|}$ 
and 
\(j \in B\), 
\begin{eqnarray}
  \label{eq:functionalconsistency}
  a_{P_j} = b_{P_j} \implies 
  a_j = b_j,
\end{eqnarray}
where \(P_j
\subseteq N\) denote the indices of \(pa(V_j)\). 
\end{restatable}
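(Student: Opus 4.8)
The plan is to prove both directions of the equivalence by characterizing exactly what it means for the set $R_h$ to be nonempty in terms of the component functions $f_j$. The key observation is that a hyperarc $h$ is valid if, and only if, for each $j \in B$ there exists a single function $f_j \in \mcal{F}_j$ that is consistent with every input-output pair that $h$ induces at node $V_j$. The condition~\eqref{eq:functionalconsistency} is precisely the statement that no such induced specification demands two different outputs from the same input, i.e. that the partial function each $h$ induces at $V_j$ is well defined.

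First I would make precise the notion of the ``induced specification'' at each node. Fix $j \in B$ and let $P_j$ index the parents $pa(V_j)$. For each global assignment $a_A \in \{0,1\}^{\abs{A}}$, setting $V_A = a_A$ and choosing any $r \in R_h$ forces $V_B = h(a_A) =: a_B$, and in particular forces the parent values $a_{P_j}$ and the output value $a_j$ at node $V_j$. Because $P_j \subseteq N$ and the $V_B$ values are a deterministic function of $(V_A, r)$, each $a_A$ yields one pair $(a_{P_j}, a_j)$. The collection of all such pairs, ranging over $a_A \in \{0,1\}^{\abs{A}}$, is the partial specification that any admissible $f_j$ must satisfy. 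Crucially, this specification depends only on the outputs of $h$, not on $r$, which is what lets us avoid iterating over $R$.

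For the forward direction, I would assume $h$ is valid, so $R_h \neq \emptyset$; pick $r \in R_h$, which selects concrete functions $f_j(\cdot, r_{V_j}) \in \mcal{F}_j$ for each $j$. Given two assignments $a_A, b_A$ with induced outputs $a_B = h(a_A)$, $b_B = h(b_A)$, suppose $a_{P_j} = b_{P_j}$. Since $V_j = f_j(pa(V_j), U_B)$ is evaluated through the \emph{same} selected function $f_j(\cdot, r_{V_j})$ in both cases, and the parent values agree, the outputs must agree: $a_j = f_j(a_{P_j}, r_{V_j}) = f_j(b_{P_j}, r_{V_j}) = b_j$. This establishes~\eqref{eq:functionalconsistency}. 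For the converse, I would assume~\eqref{eq:functionalconsistency} holds and construct an explicit $r \in R_h$. For each $j$, the condition guarantees the induced partial specification at $V_j$ is single-valued, so it extends to a total function $f_j^\ast \colon \{0,1\}^{\abs{P_j}} \to \{0,1\}$ (extend arbitrarily on unspecified parent-inputs); let $r_{V_j}$ be its index in $R_{V_j}$. Setting $r = (r_{V_j})_{j \in B}$, I would then verify by induction on the topological order that $F_B(V_A = a_A, r) = h(a_A)$ for all $a_A$, so $r \in R_h$ and $h$ is valid.

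The main obstacle I anticipate is the converse construction, specifically verifying $F_B(V_A = a_A, r) = h(a_A)$ by induction. The subtlety is that $F_B$ evaluates the chosen $f_j^\ast$ on the \emph{computed} parent values (which are themselves produced recursively by the earlier $f_k^\ast$), whereas the specification we used to define $f_j^\ast$ referenced the parent values $a_{P_j}$ \emph{dictated by $h$}. The induction must show these coincide: processing nodes in topological order, I would assume $F_k(V_A = a_A, r) = h_k(a_A)$ for all $k$ preceding $j$, so the computed parent values at $V_j$ equal the $h$-dictated values $a_{P_j}$; then $f_j^\ast$ applied to these equals the specified output $h_j(a_A) = a_j$ by construction. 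The care needed is in confirming that parents of $V_j$ in $B$ are topologically earlier (guaranteed by the topological sort) and that parents in $A$ are simply the fixed inputs $a_A$, so the recursion is well founded and the specification is honored at each step.
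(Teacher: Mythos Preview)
Your proposal is correct and follows essentially the same approach as the paper: both directions hinge on viewing the hyperarc as inducing, for each $j\in B$, a partial specification of a function $pa(V_j)\to V_j$, with condition~\eqref{eq:functionalconsistency} exactly the well-definedness of that partial specification. Your treatment is in fact more thorough than the paper's, which dispatches the forward direction with ``it is clear'' and, in the converse, simply asserts that the existence of a consistent $f_j$ for each $j$ yields $R_h\neq\emptyset$ without writing out the topological induction you (rightly) identify as the point requiring care.
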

\begin{proof}
It is clear that if a hyperarc is valid, then it is satisfies
\eqref{eq:functionalconsistency}.

Suppose a hyperarc satisfies
\eqref{eq:functionalconsistency}. Then we are given a partial
specification for a \emph{function} from $pa(V_j) \rightarrow V_j$,
i.e. the same input values are always mapped to the same output value;
however, the output is only specified for possibly a subset of input
values. 
For each $V_j \in V_B$, $r_{V_j}$
indexes the set of all possible functions $pa(V_j) \rightarrow V_j$;
therefore, for every node $j$
there exists a binary function 
$pa(V_j) \rightarrow V_j$ which satisfies the partial specification 
given 
by the hyperarc $h$. 
Thus, $R_h \neq \emptyset$, or equivalently, 
$h$ is valid.
\end{proof}
Theorem~\ref{validityh} implies that we only have to search through the set of 
possible hyperarcs from $V_A \mapsto V_B$ of cardinality $2^{|B|2^{|A|}}$ for hyperarcs which satisfy \eqref{eq:functionalconsistency} to identify the set $H$ of valid hyperarcs. 
The first two columns of Table~\ref{tab:sizetable} compare \(|R|\)
with the maximum possible 
number of hyperarcs  
\(2^{|B|2^{|A|}}\) for seven different causal inference problems (details
in Appendix). Note that the reduction in size can be several orders of
magnitude, and it increases with the complexity of the causal graph, see
e.g. Examples D and E. Thus, there is a very significant reduction in size even
if all hyperarcs are valid. The last
column in Table~\ref{tab:sizetable} lists 
\(|H|\). Considering only the valid hyperarcs further decreases the
size of the LP by at least \(1\) order of magnitude, and sometimes
more. The LPs corresponding to Examples~B and~C can be solved without
pruning; however, the LPs corresponding to Examples~A, F and G can only be solved
after pruning the problem, and the LPs for Examples~D and E are too large
even after pruning. In Section~\ref{sec:greedy} we propose a greedy
heuristic to compute bounds for these problems. 
Next, we show how to
efficiently compute 
\(c^L_h = \ones\{R_h \subseteq R_{\mcal{Q}}\}\) and \(c_h^U = \ones\{R_h
\cap R_{\mcal{Q}} \neq \emptyset\}\).

\begin{table}[t]
\begin{center}
\begin{tabular}{||r|r|r|r||} 
  \hline
  \multicolumn{1}{|c|}{Graph}
  &\multicolumn{1}{|c|}{\(|R|\)} &
                                   \multicolumn{1}{|c|}{${2^{|B|}}^{2^{|A|}}$} &
                                                                                 \multicolumn{1}{|c|}{\(\abs{H}\)}\\ [0.5ex] 
 \hline\hline
 Ex A & $1.3 \times 10^{8}$
  & $1.0 \times 10^{6}$ & $2.3 \times 10^3$ \\
 \hline
 Ex B & $4.2 \times 10^6$ & $1.0 \times 10^{6}$ & $7.1 \times 10^4$ \\
 \hline
 Ex C & $4.2 \times 10^6$
 & $1.0 \times 10^6$ & $4.4 \times 10^4$ \\
 \hline
 Ex D & $6.3 \times 10^{57}$
 & $1.7 \times 10^7$ & $9.4 \times 10^6$ \\
 \hline
  Ex E & $3.2 \times 10^{32}$
 & $1.7 \times 10^7$ & $9.4 \times 10^6$ \\
 \hline
 Ex F & $1.8 \times 10^{13}$
 & $6.5 \times 10^4$ & $5.8 \times 10^4$ \\
 \hline
 Ex G & $3.0 \times 10^{23}$
 & \(1.0 \times 10^6\) & $9.2 \times 10^3$ \\
 \hline
\end{tabular}
\end{center}
\caption{The naive LP for computing causal bounds has $\abs{R}$
    variables, where~\(\abs{R}\)  denotes the cardinality of the set of all
    possible values for the response  
    function variables. The number of variables 
    drops to 
    $2^{\abs{B}2^{\abs{A}}}$ when the LP is formulated in terms of hyperarcs, and the
    number of variables can be further reduced to 
    \(\abs{H}\), the cardinality of the set of \emph{valid} hyperarcs. Note that $\abs{R} \gg 2^{\abs{B}2^{\abs{A}}}\gg
    \abs{H}$. See Section~\ref{sec:general,pruningLP} for details.}
\label{tab:sizetable}
\end{table}

\subsection{Efficiently computing $c_h^L = \ones\{R_h \subseteq R_{\mcal{Q}}\}$}
\label{sec:computing chL}
We show how to check if \(R_h \subseteq R_{\mcal{Q}}\) efficiently
i.e. without iterating over the set \(R_h\) and checking if each value
lies in \(R_{\mcal{Q}}\). Instead, we show that the outputs of \(h\) alone
are sufficient to determine if \(R_h \subseteq R_{\mcal{Q}}\). As 
before, 
we illustrate 
the main ideas using the graph in
Figure~\ref{fig:causalgraph} with 
query \(\mcal{Q} = \mathbb{P}(Y(X=1)=1|Z=1)\), and then prove them. 
\begin{restatable}[]{theorem}{CChyperarc}
\label{thm:CCh}
Suppose the causal graph satisfies
Assumption~\ref{ass:validGraph} and the query $\mcal{Q}$ satisfies
Assumption~\ref{ass:validQuery}. Then 
\(R_h \subseteq R_{\mcal{Q}}\) if, and
only if, there exists 
\(v \in \{0,1\}^{|N|}\) such that \(h(v_A) = v_B\), $v_{A \cap C(\mcal{Q})} = q_{A \cap C(\mcal{Q})}$, $v_{I} = q_{I}$ and $v_{O} = q_O$.
%
%
\end{restatable}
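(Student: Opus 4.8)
The plan is to reduce the containment $R_h \subseteq R_{\mcal{Q}}$ to a comparison between two deterministic evaluations that use the \emph{same} response function $r$: the natural evaluation $F_B(V_A = v_A, r)$ that defines the hyperarc $h$, and the interventional evaluation $F_O((V_A,V_I)=(q_A,q_I),r)$ that defines $R_{\mcal{Q}}$. The enabling observation is a locality lemma: for fixed $r$, the value $F_O((V_A,V_I)=(q_A,q_I),r)$ depends on $r$ only through the response functions of the critical variables in $V_{C(\mcal{Q})}\cap(V_B\setminus V_I)$, and on the exogenous inputs only through $q_I$ and $q_{A\cap C(\mcal{Q})}$. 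This holds because in the mutilated graph $G^{do(V_I=q_I)}$ the value of $V_O$ is obtained by back-substitution through parents, every parent of a critical variable is itself critical, and the only sources feeding this critical sub-DAG are the clamped variables $V_I=q_I$ and the critical context $V_{A\cap C(\mcal{Q})}$; non-critical variables never enter the computation of $V_O$.

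For the ``if'' direction, suppose such a $v$ exists. I would fix an arbitrary $r\in R_h$ and show $r\in R_{\mcal{Q}}$. Since $r\in R_h$, the natural evaluation at $V_A=v_A$ produces $V_B=h(v_A)=v_B$, so the natural values of $V_I$ and $V_O$ are $v_I=q_I$ and $v_O=q_O$. I then compare this natural evaluation at $v_A$ with the interventional evaluation at $(q_A,q_I)$ by induction along the topological order of the critical variables. The base cases agree: each critical context variable equals $q_{A\cap C(\mcal{Q})}$ in both evaluations (because $v_{A\cap C(\mcal{Q})}=q_{A\cap C(\mcal{Q})}$), and each $V_i\in V_I$ equals $q_I$ in both (clamped in the mutilated graph, and equal to $v_I=q_I$ in the natural one). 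For the inductive step, a critical $V_j\in V_B\setminus V_I$ has all its parents critical and earlier in the order, hence equal across the two evaluations, and the same response function is applied, so $V_j$ matches. Propagating to $V_O$ yields $F_O((V_A,V_I)=(q_A,q_I),r)=v_O=q_O$, i.e. $r\in R_{\mcal{Q}}$.

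For the ``only if'' direction I would prove the contrapositive: assuming $h$ is valid (so $R_h\neq\emptyset$ by Theorem~\ref{validityh}) and that no admissible $v$ exists, I construct some $r\in R_h\setminus R_{\mcal{Q}}$. Negating the existence statement splits into two cases. In Case~1 there is a context $v_A$ with $v_{A\cap C(\mcal{Q})}=q_{A\cap C(\mcal{Q})}$ and $h(v_A)_I=q_I$ but $h(v_A)_O\neq q_O$; then the inductive comparison above applies verbatim, since the natural evaluation at $v_A$ already realizes $V_I=q_I$, and hence for every $r\in R_h$ the interventional value equals $h(v_A)_O\neq q_O$, so any $r\in R_h$ suffices. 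In Case~2 no query-consistent context realizes $V_I=q_I$ naturally; here the interventional boundary is never produced by a relevant natural evaluation, and the idea is to start from some $r_0\in R_h$ and re-define the critical response functions on the inputs generated by this boundary so as to flip $V_O$ away from $q_O$, while leaving $F_B(V_A=v_A,\cdot)$ unchanged for every $v_A$ so that the modified $r$ remains in $R_h$.

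The hard part will be Case~2. The subtlety is that, even when the full boundary $V_I=q_I$ is unreachable, a single critical response function may still be evaluated during the interventional computation at a parent-configuration that does occur in some natural evaluation, and is therefore pinned by $h$ and not free to change; intuitively, a mediator can wash out the intervention, so that the interventional computation reuses response-function values borrowed from a different natural evaluation. To make the flip available one must exhibit, along the interventional computation from $(q_{A\cap C(\mcal{Q})},q_I)$, at least one critical variable whose parent-configuration is genuinely fresh, and this is exactly where the structure of the causal graph and the assumption that every intervened variable is critical (Assumption~\ref{ass:validQuery}) have to be used. I would approach it by a ``first point of divergence'' induction that locates the earliest critical variable, in topological order, at which the interventional evaluation disagrees with every query-consistent natural evaluation on some parent value; proving that such a variable always exists, and that its response function can be re-set without leaving $R_h$, is the step I expect to demand the most care.
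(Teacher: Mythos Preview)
Your overall architecture matches the paper's exactly: the same two-case split of the contrapositive (your Case~1 is the paper's case~(b), your Case~2 is the paper's case~(a)), and your ``if'' direction is a rigorous version of the paper's one-sentence claim. Where you diverge is in how much care you take in Case~2. The paper dispatches that case in a single line---``Since $R$ indexes all possible functions on the causal graph, there exists $r\in R$ such that $r\in R_h$ and $r$ maps $(V_{A\cap C(\mcal{Q})},V_I)=(q_{A\cap C(\mcal{Q})},q_I)$ to $V_O\neq q_O$''---with no further justification. So you are more careful than the paper here, not less: the ``washing-out'' phenomenon you flag is precisely the step the paper's proof skips over.

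And your concern appears to be genuine. Take $V_A=\{Z_1,Z_2\}$, $V_B=\{X,M,Y\}$ with edges $Z_1\to X$, $Z_2\to M$, $X\to M$, $M\to Y$, and query $\mbb{P}(Y(X{=}1){=}1\mid Z_1{=}q_1,\,Z_2{=}1)$, so that $A\cap C(\mcal{Q})=\{Z_2\}$. Define the valid hyperarc $h$ by $h(z_1,0)=(0,0,1)$ and $h(z_1,1)=(0,1,1)$ for both $z_1$ (outputs listed as $(X,M,Y)$). Every $r\in R_h$ has $f_Y(0)=f_Y(1)=1$, because the constraint $f_Y(0)=1$ is imposed by the \emph{non-query-consistent} context $z_2=0$; hence $R_h\subseteq R_{\mcal{Q}}$. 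Yet no $(z_1,1)$ gives $h(z_1,1)_X=1$, so the theorem's existence condition fails. Here the first point of divergence is $M$, and $f_M(Z_2{=}1,X{=}1)$ is indeed free in $R_h$, but whichever value it is set to, $f_Y$ at that value is already pinned by $h$---the freedom at $M$ does not propagate to $Y$. Thus neither your ``first point of divergence'' sketch nor the paper's one-line assertion actually closes Case~2; the difficulty you sensed is real, and the gap is not in your write-up alone.
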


Consider 
the query \(\mcal{Q} = \mbb{P}(Y(X = 1)=1|Z=1)\) in the causal
graph in Figure~\ref{fig:causalgraph}. Here $V_A = \{Z\}$, $V_{B} =
\{X, Y\}$, $V_I = \{X\}$ and $V_O = \{Y\}$.  For a hyperarc $h:Z \mapsto (X,Y)$, Theorem~\ref{thm:CCh} implies  
\begin{equation}
  \label{eq:hConsistent}
  R_h \text{ \(\subseteq R_{\mcal{Q}}\)}
  \iff  h(z) = (1,1) \text{ for
  some } z \in \{0,1\}.
\end{equation}
Hence, Theorem~\ref{thm:CCh} implies that we can efficiently compute $c^L_h = \ones\{R_h \subseteq R_{\mcal{Q}}\}$ for a hyperarc \(h\) by considering only the outputs of the hyperarc, instead of $R_h$.
  \begin{proof}
  Suppose there exists $v \in \{0,1\}^{|N|}$ such that $h(v_A)=v_B$, $v_{A \cap C(\mcal{Q})} = q_{A \cap C(\mcal{Q})}$, $v_{I} = q_{I}$ and $v_O = q_O$. Then, every $r \in R_{h}$
  maps $(V_{A \cap C(\mcal{Q})}, V_{I}) = (q_{A \cap C(\mcal{Q})}, q_{I})$ to $V_{O}=q_O$, and thus, 
  $r \in R_{\mcal{Q}}$ i.e. $R_{h} \subseteq R_{\mcal{Q}}$. To establish the opposite direction, suppose \(R_h \subseteq R_{\mcal{Q}}\), but there does not exist $v \in
  \{0,1\}^{|N|}$  
  such that $h(v_A) = v_B$, $v_{A \cap C(\mcal{Q})} = q_{A \cap C(\mcal{Q})}$, $v_{I} = q_{I}$ and $v_O = q_O$. Equivalently, 
  for all $v
  \in \{0,1\}^{|N|}$ such that $h(v_A) = v_B$ and $v_{A \cap C(\mcal{Q})} = q_{A \cap C(\mcal{Q})}$, we have either $v_{I} \neq q_{I}$ or $v_O
  \neq q_O$. We consider these two cases separately. 
  \begin{enumerate}[(a)]
  \item For all \(v \in \{0,1\}^{|N|}\) such that \(h(v_A) = v_B\) and \(v_{A \cap C(\mcal{Q})} = q_{A \cap C(\mcal{Q})}\), we have $v_{I} \neq q_{I}$: 
    Since $R$ indexes all possible functions on the causal graph, 
    there exists $r \in R$ such that:
    \begin{itemize}
    \item $r$ maps $V_A = v_A$ to $V_B = h(v_A)$ for all $v_A$,  
      i.e. $r \in R_h$.
    \item $r$ maps $V_{A \cap C(\mcal{Q})} = q_{A \cap C(\mcal{Q})}$, and  $V_{I} = q_{I} \neq v_{I}$ to $V_{O} \neq q_O$.
    \end{itemize}
    Hence $R_h \nsubseteq R_{\mcal{Q}}$, a contradiction.
  \item There exists \(v \in \{0,1\}^{|N|}\) such that \(h(v_A) = v_B\), \(v_{A \cap C(\mcal{Q})} = q_{A \cap C(\mcal{Q})}\) and $v_{I} = q_{I}$ but $v_ O \neq q_O$: In this case, every $r \in
    R_{h}$  maps $(V_{A \cap C(\mcal{Q})},V_{I}) = (q_{A \cap C(\mcal{Q})}, q_{I})$ to $V_{O} \neq q_O$, and therefore, $r \not
    \in R_{\mcal{Q}}$.   Hence $R_{h} \nsubseteq R_{\mcal{Q}}$.
  \end{enumerate}
\end{proof}


\subsection{Efficiently computing $c_h^U = \ones\{R_h \cap R_{\mcal{Q}}
  \neq 
\emptyset\}$}

\label{sec:computingchU}

We now show how to 
efficiently check  
if
\(R_h \cap
R_{\mcal{Q}} = \emptyset\). 
As in the case with $c^L_h$, 
we show that 
the condition
\(R_h \cap R_{\mcal{Q}} = \emptyset\) can be checked without iterating over the set
\(R_h\). 
Instead, the
outputs of \(h\) alone are sufficient to determine if \(R_h \cap
R_{\mcal{Q}} = \emptyset\).

\begin{restatable}[]{theorem}{SChyperarc}
\label{SChyperarc}
Suppose the causal graph satisfies Assumption~\ref{ass:validGraph} and the
query satisfies Assumption~\ref{ass:validQuery}. Then
\(R_h \cap R_{\mcal{Q}} = \emptyset\) if, and
only if, there exists 
\(v \in \{0,1\}^{|N|}\) such that \(h(v_A) = v_B\), 
$v_{A \cap C(\mcal{Q})} = q_{A \cap C(\mcal{Q})}$, $v_{I} = q_{I}$ and $v_{O} \neq q_O$.
\end{restatable}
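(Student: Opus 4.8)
The plan is to mirror the proof of Theorem~\ref{thm:CCh}, exchanging the conclusion $v_O = q_O$ for $v_O \neq q_O$ throughout. The workhorse is a determinacy claim: once a ``witness'' $v$ with $h(v_A) = v_B$, $v_{A \cap C(\mcal{Q})} = q_{A \cap C(\mcal{Q})}$ and $v_I = q_I$ exists, the interventional outcome is pinned down for every $r \in R_h$ simultaneously, namely $F_O\big((V_A,V_I) = (q_A,q_I), r\big) = v_O$ for all $r \in R_h$. First I would prove this claim by induction along the critical subgraph of the mutilated graph $G^{do(V_I = q_I)}$, taken in topological order. The base cases are the roots $V_{A \cap C(\mcal{Q})}$, fixed to $q_{A \cap C(\mcal{Q})}$, and the intervened variables $V_I$, which are critical by Assumption~\ref{ass:validQuery} and are fixed to $q_I$; at the witness both agree with the observational assignment $v$. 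For the inductive step, any critical $V_j \notin V_I$ has, in $G^{do(V_I = q_I)}$, the same parents as in $G$, and every parent of a critical node is itself critical (prepending the edge yields a path to $V_O$); hence by the induction hypothesis the parents carry identical values in the interventional run and in the observational run $V_A = v_A$, so the common response function $f_j(\cdot, r_{V_j})$ returns the same value $v_j = h(v_A)_j$ in both. Specializing to $V_j \in V_O$ gives the claim.

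Granting the determinacy claim, the forward implication ($\Leftarrow$) is immediate: a witness with $v_O \neq q_O$ forces every $r \in R_h$ to map $(q_{A \cap C(\mcal{Q})}, q_I)$ to $V_O = v_O \neq q_O$, so no $r \in R_h$ lies in $R_{\mcal{Q}}$, i.e. $R_h \cap R_{\mcal{Q}} = \emptyset$. For the reverse implication I would argue the contrapositive, assuming no witness with $v_O \neq q_O$ exists, and split into two cases. In case (b), a witness with $v_O = q_O$ exists; then Theorem~\ref{thm:CCh} gives $R_h \subseteq R_{\mcal{Q}}$, and since $h$ is valid, $R_h \neq \emptyset$ by Theorem~\ref{validityh}, so $R_h \cap R_{\mcal{Q}} \neq \emptyset$. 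In case (a), no witness exists at all, i.e. for every $v$ with $h(v_A) = v_B$ and $v_{A \cap C(\mcal{Q})} = q_{A \cap C(\mcal{Q})}$ we have $v_I \neq q_I$; here I must exhibit some $r \in R_h$ with $F_O\big((q_A, q_I), r\big) = q_O$, which yields $R_h \cap R_{\mcal{Q}} \neq \emptyset$ and closes the contrapositive.

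Case (a) is the main obstacle, and it is the exact dual of the existence step in case (a) of Theorem~\ref{thm:CCh} (which instead needed an $r$ producing $V_O \neq q_O$). The intended idea is that, because $R$ indexes \emph{all} functions $pa(V_j) \mapsto V_j$ for each $j \in B$, one can start from any $r_0 \in R_h$ and re-specify the response functions of the critical variables at the inputs visited by the interventional propagation from $V_I = q_I$, steering the outcome to $q_O$ while leaving the observational behaviour — hence membership in $R_h$ — intact. The delicate point, and the step I expect to require the most care, is justifying that these intervention-visited inputs are genuinely left free by $h$: since no witness exists, the value $V_I = q_I$ is never realized observationally under $v_{A \cap C(\mcal{Q})} = q_{A \cap C(\mcal{Q})}$, so the first critical descendant of $V_I$ should be evaluated at an input $h$ does not pin down, and this freedom must then be shown to propagate down the critical path to $V_O$. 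Making this propagation precise is subtle precisely because a variable in $V_A$ may influence a critical node along a path that bypasses $V_I$, which can re-constrain an intervention-visited input through an observational run that is \emph{not} a witness; handling this interaction carefully is where the bulk of the work lies, while everything else follows the template of Theorem~\ref{thm:CCh} with the roles of $q_O$ and its complement interchanged.
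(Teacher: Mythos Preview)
Your proposal is correct and follows the same two-case contrapositive structure as the paper's own proof. You are more explicit than the paper about the determinacy claim (which the paper invokes implicitly in both the forward direction and case~(b)) and about the construction in case~(a), where the paper simply asserts ``since $R$ indexes all possible functions on the causal graph, there exists $r \in R$'' with the two required properties; the subtlety you flag there is real, but it is exactly the step the paper elides rather than argues.
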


Again, consider 
the query \(\mcal{Q} = \mathbb{P}(Y(X=1)=1|Z=1)\) in the causal
graph in Figure~\ref{fig:causalgraph}. For a hyperarc $h:Z \mapsto (X,Y)$, Theorem~\ref{SChyperarc} implies  
\begin{equation}
  \label{eq:hInconsistent}
  R_h \cap R_{\mcal{Q}} = \emptyset
  \iff  h(z) = (1,0) \text{ for
  some } z \in \{0,1\}.
\end{equation}
Hence, Theorem~\ref{SChyperarc} implies that we can efficiently compute $c^U_h = \ones\{R_h \cap R_{\mcal{Q}} \neq \emptyset\}$ for a hyperarc \(h\) by considering only the outputs of 
the hyperarc, instead of $R_h$.
\begin{proof}
Suppose there exists $v \in \{0,1\}^{|N|}$ such that $h(v_A)=v_B$, $v_{A \cap C(\mcal{Q})} = q_{A \cap C(\mcal{Q})}$, $v_{I} = q_{I}$ and $v_O \neq q_O$. Then, every $r \in R_{h}$
  maps $(V_{A \cap C(\mcal{Q})}, V_{I})=(q_{A \cap C(\mcal{Q})}, q_{I})$ to $V_{O} \neq q_O$, and thus, 
  $r \not \in R_{\mcal{Q}}$ i.e. \(R_h \cap R_{\mcal{Q}} = \emptyset\). To establish the opposite direction, suppose \(R_h \cap R_{\mcal{Q}} = \emptyset\), but there does not exist $v \in
  \{0,1\}^{|N|}$  
  such that $h(v_A) = v_B$, $v_{A \cap C(\mcal{Q})} = q_{A \cap C(\mcal{Q})}$, $v_{I} = q_{I}$ and $v_{O} \neq q_O$. Equivalently, for all $v
  \in \{0,1\}^{|N|}$ such that $h(v_A) = v_B$ and $v_{A \cap C(\mcal{Q})} = q_{A \cap C(\mcal{Q})}$, we have either $v_{I} \neq q_{I}$ or $v_O
  = q_O$. We consider these two cases separately. 
  \begin{enumerate}[(a)]
  \item For all \(v \in \{0,1\}^{|N|}\) such that \(h(v_A) = v_B\) and \(v_{A \cap C(\mcal{Q})} = q_{A \cap C(\mcal{Q})}\), we have $v_{I} \neq q_{I}$: 
    Since $R$ indexes all possible functions on the causal graph, 
    there exists $r \in R$ such that:
    \begin{itemize}
    \item $r$ maps $V_A = v_A$ to $V_B = h(v_A)$ for all $v_A$,  
      i.e. $r \in R_h$.
    \item $r$ maps $V_{A \cap C(\mcal{Q})} = q_{A \cap C(\mcal{Q})}$, and  $V_{I} = q_{I} \neq v_{I}$ to $V_{O} = q_O$.
    \end{itemize}
    Hence $R_h \cap R_{\mcal{Q}} \neq \emptyset$, a contradiction.
  \item There exists \(v \in \{0,1\}^{|N|}\) such that \(h(v_A) = v_B\), \(v_{A \cap C(\mcal{Q})} = q_{A \cap C(\mcal{Q})}\) and $v_{I} = q_{I}$ but $v_O = q_O$: In this case, every $r \in
    R_{h}$  maps $(V_{A \cap C(\mcal{Q})}, V_{I}) = (q_{A \cap C(\mcal{Q})}, q_{I})$ to $V_{O} = q_O$, and therefore, $r 
    \in R_{\mcal{Q}}$.   Hence $R_{h} \cap R_{\mcal{Q}} \neq \emptyset$.
  \end{enumerate}
\end{proof}
\subsection{Computing the Pruned LPs}
\label{sec:finalprocedure}

\begin{algorithm}[t]
\noindent \textbf{Input:} (i) causal graph \(G\),  (ii) query \(\mcal{Q} =
\mathbb{P}(V_O(V_I=q_I)=q_O|V_A =q_A)\),  (iii) conditional probability
distribution $p_{v_B.v_A} = \mathbb{P}(V_B = 
v_B|V_A = v_A)$, for all $v_A, v_B$.\\
\textbf{Output}: Pruned LPs \eqref{eq:prunedprimal} and
\eqref{eq:prunedPrimalDualUpper}\\
$H \gets \emptyset$\\
\For{$h:  V_A \rightarrow V_B$}{
  \If{$h$ is valid (Theorem~\ref{validityh})}{
    $H \gets H \cup \{h\}$\\
    Compute \(c_h^L\) using Theorem \ref{thm:CCh}\\ [0.1em]
    Compute \(c_h^U\) using Theorem \ref{SChyperarc}\\  
  }
}
\Return{LPs \eqref{eq:prunedprimal} and \eqref{eq:prunedPrimalDualUpper}
  constructed using \((H, c^L, c^U)\)} 
\caption{Procedure to efficiently construct LPs \eqref{eq:prunedprimal}, \eqref{eq:prunedPrimalDualUpper}} 
\label{alg:ourprocedure}
\end{algorithm}

Algorithm~\ref{alg:ourprocedure}
describes a
procedure that uses results established in
Sections~\ref{sec:validH}, \ref{sec:computing chL} and
\ref{sec:computingchU}
to efficiently construct pruned LPs \eqref{eq:prunedprimal} and \eqref{eq:prunedPrimalDualUpper} without formulating the original LPs or iterating over \(R\).\\

\section{Bounds in Closed Form}
\label{sec:closedform}
Now we show that
if $A \subseteq \mcal{C}(\mcal{Q})$, i.e. all $V_A$ variables are critical for the query, 
the bounds can be computed in closed
form by simply adding appropriate conditional probabilities in the input data. 
This is in contrast to the closed form bounds in 
\cite{balke94} and \cite{sachs2021general} that are computed by enumerating vertices of the constraint polytope. These bounds cannot be computed for large graphs
since the total number of vertices grows
exponentially in the size of the associated LP, which itself is large.
On the other hand, our bounds, which involve simply adding
probabilities in the input data, scale significantly better to larger
problems. 
  
We motivate 
the main ideas of the result using the 
causal graph in Figure~\ref{fig:closedformgraph} and query \(\mcal{Q} = \mathbb{P}(Y(X=1)=1|Z=1)\). Note that \(Z\) is critical for this query in this graph due to the edge \(Z \rightarrow Y\).
The results in
Section~\ref{sec:computing chL} 
imply that 
$c_h^L = 1$ 
if, and only if, \(h(1) =
(1,1)\). Thus, it follows that the objective of the pruned LP 
for the lower bound can be written as
\begin{eqnarray}
\sum_{\{h \in H: c_h^L=1\}} q_h &= & \sum_{\{h \in H: h(1) = (1,1)\}} q_h\nonumber \\
  &= & p_{11.1}\label{eq:eg2}
\end{eqnarray}
where \eqref{eq:eg2} follows from the constraints of the pruned
LP~\eqref{eq:prunedprimal}. Hence, the lower bound can be computed in closed form.   

On the other hand, consider the (original)
causal graph in Figure~\ref{fig:causalgraph}
and the query 
$\mcal{Q}= \mathbb{P}(Y(X=1)=1|Z=1)$. Now \(Z\) is no longer critical for the query in this graph. From~\eqref{eq:hConsistent} we have
that 
\(c_h^L=1\) 
if, and only
if,   
 there exists $z \in \{0,1\}$ such that  $h(z) = (1,1)$. The objective of the pruned LP is:
\begin{eqnarray*}
\sum_{\{h \in H: c_h^L=1\}} q_h &=& \sum_{\{h \in H: \exists z \in \{0,1\},h(z) = (1,1)\}} q_h\\
&\neq& \sum_{h \in H: h(0) = (1,1)} q_h + \sum_{h \in H: h(1) = (1,1)} q_h,
\end{eqnarray*}
since \(\{h \in H: h(0) = (1,1)\} \cap \{h \in H: h(1) = (1,1)\} \neq
\emptyset\).
Hence, we cannot compute the lower bound in closed form for the query. It appears that we need the input variable to the hyperarc \(h\) to be critical for the query in order to compute closed form bounds, which is formalized by the condition \(A \subseteq C(\mcal{Q})\).
\begin{figure}[t]
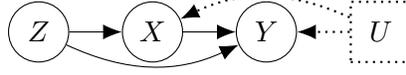

\begin{center}

  \begin{influence-diagram}
    \node (X) [] {$X$};
    \node (Y) [right = of X] {$Y$};
    \node (Z) [left = of X] {$Z$};
    \node (U) [right  = of Y, information, decision] {$U$};
    
    \edge {Z} {X};

    \edge {X} {Y};
    \path (Z) edge[->, bend right=25] (Y);
    \path (U) edge[->, bend right=25, information] (X);
    \edge[information] {U} {Y};
  \end{influence-diagram}
\end{center}
 \caption{Causal Graph for Query \(\mathcal{Q}_1\)}
 \label{fig:closedformgraph}
\end{figure}

\begin{restatable}[Closed Form Bounds for Special Class of Problems]{theorem}{speciallower}
  \label{theorem:speciallower}
  Suppose $A \subseteq C(\mcal{Q})$.
  Then the optimal values of LPs \eqref{eq:prunedprimal} and \eqref{eq:prunedPrimalDualUpper} are given by
  \begin{eqnarray*}
    \alpha_L & = & \sum_{\{v_B : v_{I} = q_{I},v_{O} = q_O\}} p_{v_B.q_A},\\
    \alpha_U & = & 1 - \sum_{\{v_B: v_{I} = q_{I}, v_{O} \neq q_O\}} p_{v_B.q_A}.
  \end{eqnarray*}
\end{restatable}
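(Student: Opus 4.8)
The plan is to show that the hypothesis $A \subseteq C(\mcal{Q})$ forces the objectives of both pruned LPs to be \emph{constant} over the entire feasible region. Once that is established, no optimization is actually needed: the common value of the objective is simultaneously the minimum and the maximum, which immediately gives the stated formulas for $\alpha_L$ and $\alpha_U$.

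For the lower bound, I would first specialize Theorem~\ref{thm:CCh} to the present case. Since $A \subseteq C(\mcal{Q})$ we have $A \cap C(\mcal{Q}) = A$, so the existence condition $v_{A \cap C(\mcal{Q})} = q_{A \cap C(\mcal{Q})}$ becomes $v_A = q_A$. But $v_A = q_A$ pins down the input to the hyperarc, so $v_B = h(v_A) = h(q_A)$ is completely determined, and the remaining conditions $v_I = q_I$ and $v_O = q_O$ become conditions on $h(q_A)$ alone. Hence $c_h^L = \ones\{R_h \subseteq R_{\mcal{Q}}\} = 1$ if, and only if, writing $v_B = h(q_A)$, we have $v_I = q_I$ and $v_O = q_O$. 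I would then group the hyperarcs by the value $v_B = h(q_A)$:
\[
\sum_{\{h \in H:\, c_h^L=1\}} q_h = \sum_{\{v_B:\, v_I = q_I,\, v_O = q_O\}}\ \sum_{\{h \in H:\, h(q_A) = v_B\}} q_h.
\]
By the equality constraint of LP~\eqref{eq:prunedprimal} evaluated at $v_A = q_A$, the inner sum equals $p_{v_B.q_A}$, so the objective collapses to $\sum_{\{v_B:\, v_I = q_I,\, v_O = q_O\}} p_{v_B.q_A}$ for \emph{every} feasible $q$. As this is independent of $q$, it is exactly $\alpha_L$.

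The upper bound follows by the same mechanism applied to Theorem~\ref{SChyperarc}: with $A \cap C(\mcal{Q}) = A$, the condition for $R_h \cap R_{\mcal{Q}} = \emptyset$ reduces to a condition on $h(q_A)$, namely that $v_B = h(q_A)$ satisfies $v_I = q_I$ but $v_O \neq q_O$. Thus $c_h^U = 0$ precisely on those hyperarcs. Writing $\sum_{h} c_h^U q_h = \sum_{h} q_h - \sum_{\{h:\, c_h^U = 0\}} q_h$, using the normalization $\sum_{h} q_h = 1$ (implied by the constraints, as noted after~\eqref{eq:generalprimal}), and again applying the constraint at $v_A = q_A$ to the second sum, I would obtain $\sum_{h} c_h^U q_h = 1 - \sum_{\{v_B:\, v_I = q_I,\, v_O \neq q_O\}} p_{v_B.q_A}$, again constant in $q$ and therefore equal to $\alpha_U$.

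The only real subtlety — the step I expect to carry the weight of the argument — is the observation that $A \subseteq C(\mcal{Q})$ forces $c_h^L$ and $c_h^U$ to depend only on the \emph{single} output $h(q_A)$ rather than on the full hyperarc. This is exactly what makes each objective a linear combination of constraint right-hand sides and hence constant on the feasible polytope; everything after that is bookkeeping with the LP constraints and the normalization. The contrast in the discussion preceding the theorem, where $Z \notin C(\mcal{Q})$ in Figure~\ref{fig:causalgraph} and the coefficient depends on $h$ through an existential over $z$ (so that the relevant index sets overlap and the sum does not telescope into the constraints), illustrates why criticality of the $V_A$ variables is essential and where a naive attempt would break down.
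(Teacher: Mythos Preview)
Your proposal is correct and follows essentially the same approach as the paper: specialize Theorems~\ref{thm:CCh} and~\ref{SChyperarc} using $A \cap C(\mcal{Q}) = A$ so that $c_h^L$ and $c_h^U$ depend only on $h(q_A)$, group hyperarcs by the value $v_B = h(q_A)$, and invoke the LP constraints at $v_A = q_A$ together with the implied normalization $\sum_h q_h = 1$. Your framing that the objective becomes constant on the feasible region is a slightly cleaner way of stating what the paper does implicitly when it writes the chain of equalities starting directly from $\alpha_L$ and $\alpha_U$.
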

\begin{proof}
Theorem~\ref{thm:CCh} implies \(R_h \subseteq R_{\mcal{Q}}\) if, and only if, there exists \(v =
(v_A, v_B)\) 
such that \(h(v_A) = v_B\), \(v_{A \cap C(\mcal{Q})} = q_{A \cap C(\mcal{Q})}\), \(v_{I}  = q_{I} \), and \(v_O = q_O\). Since $A \subseteq C(\mcal{Q})$, it
follows that ${A \cap C(\mcal{Q})} = A$, and therefore, 
for 
a hyperarc $h$, the set \(R_h \subseteq R_{\mcal{Q}}\) if, and only if,
there exists \(v_B\) such that 
\(h(q_A) = v_B\), \(v_{I} = q_{I}\), and \(v_O = q_O\).
Thus, it follows that 
\begin{eqnarray}
 \alpha_L &=& \sum_{\{h \in H: c_h^L=1\}} q_h \nonumber\\
  &= & \sum_{\{h \in H: \text{ \(R_h \subseteq R_{\mcal{Q}}\)\}}} q_h \nonumber \\
  &= & \sum_{\{v_B : v_{I} = q_{I},v_{O} = q_O\}} \sum_{\{h \in H: h(q_A) = v_B\}} q_h \label{eq:2}\\ 
  &= & \sum_{\{v_B : v_{I} = q_{I},v_{O} = q_O\}} p_{v_B.q_A} \label{eq:4}  
\end{eqnarray}
where  \eqref{eq:2} follows from the discussion above and \eqref{eq:4} from
the constraints of the pruned LP.

Theorem~\ref{SChyperarc} implies that \(R_h \cap R_{\mcal{Q}} = \emptyset\) if, and only if, there exists \(v =
(v_A, v_B)\) 
such that \(h(v_A) = v_B\), \(v_{A \cap C(\mcal{Q})} = q_{A \cap C(\mcal{Q})}\), \(v_{I}  = q_{I} \), and \(v_O \neq q_O\). 
Since $A \subseteq C(\mcal{Q})$, 
it follows that \(R_h \cap R_{\mcal{Q}} = \emptyset\) if, and only if, there exists \(v_B\) such that
  \(h(q_A) = v_B\), \(v_{I} = q_{I}\), and \(v_O \neq q_O\).
Thus, it follows that
\begin{eqnarray}
\sum_{\{h \in H: c_h^U=0\}}q_h &= & \sum_{\{h \in H: \text{ \(R_h \cap R_{\mcal{Q}} = \emptyset\)} \}}
       q_h \nonumber \\ 
  &= & \sum_{\{v_B: v_{I} = q_{I}, v_{O} \neq q_O\}}
       \sum_{\{h \in H: h(q_A) = v_B\}} q_h  \label{ubeq:2}\\ 
  &= & \sum_{\{v_B: v_{I} = q_{I}, v_{O} \neq q_O\}} p_{v_B.q_A} \label{ubeq:4} 
\end{eqnarray}
where \eqref{ubeq:2} follows
from 
the discussion above and \eqref{ubeq:4} from the constraints of
the pruned LP. 
The result follows by the fact that \(\alpha_U = \sum_{\{h \in H: c_h^U=1\}}q_h = 1 -
\sum_{\{h \in H: c_h^U=0\}}q_h\).
\end{proof}
An important example of a class of problems 
where $A \subseteq C(\mcal{Q})$ 
is the well-studied setting where multiple
confounded treatments influence one outcome~\citep{ranganath2019multiple,
  janzing2018,pmlr-v89-d-amour19a,tran2017implicit}. An example from this class of problems was discussed in Section~\ref{sec:introduction}, where the treatments are medications and procedures, and the outcome is the health outcome of the patient. 

\citet{blei2019blessings,pmlr-v139-wang21c} 
introduced the 
\emph{deconfounder} as a method to predict the expected value of the outcome variable
under treatment interventions in this setting. One of the limitations of 
the deconfounder approach is that it
cannot be applied 
in a setting 
where the treatment variables
have causal relationships between 
them~\citep{OST2019blessings,
  blessingsrejoinder2019, imai2019discussion}.  For example, in our context,  
the side effects of
one treatment can influence the prescription of another
treatment~\citep{OST2019blessings}, as implied by arrows $T_1 \rightarrow
T_3$, $T_2 \rightarrow T_3$, $T_3 \rightarrow T_4$ and $T_3 \rightarrow T_5$
in Figure~\ref{fig:ExampleSpecialClass}.
The deconfounder cannot be used for inference in this setting. However, since the entire set $V_{A} = \{C_1,C_2\}$ is critical for the query, Theorem \ref{theorem:speciallower} 
can be used to compute bounds for the 
query $\mathbb{E}[Y(\textbf{T} = t)|C_1 = c_1,C_2 = c_2]$ in
closed form. In particular,
\begin{eqnarray*}
\alpha_L &=& \mathbb{P}(\textbf{T} = t, Y = 1|C_1 = c_1,C_2 = c_2)\\
\alpha_U &=& 1-\mathbb{P}(\textbf{T} = t, Y = 0|C_1 = c_1,C_2 = c_2)
\end{eqnarray*}
\begin{figure}[t]
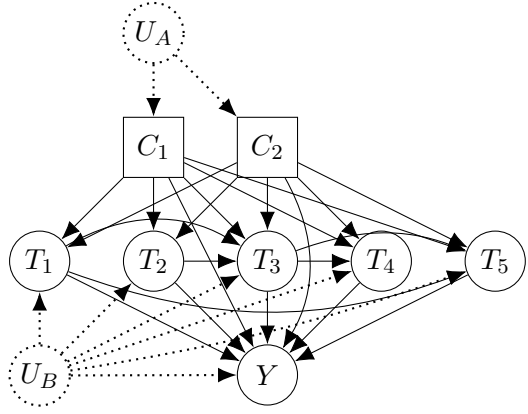


  \centering
  
  \begin{influence-diagram}
    \node (T1) [] {$T_{1}$};
    \node (T2) [right = of T1] {$T_{2}$};
    \node (T3) [right = of T2] {$T_{3}$};
    \node (T4) [right = of T3] {$T_{4}$};
    \node (T5) [right = of T4] {$T_{5}$};
    \node (Y) [below = of T3] {$Y$};
    \node (C1) [above = of T2, decision] {$C_1$};
    \node (C2) [above = of T3, decision] {$C_2$};
    \node (UB) [below = of T1, information] {$U_B$};
    \node (UA) [above = of C1, information] {$U_A$};
    
    \edge {T1} {Y};
    \edge {T2} {Y};
    \edge {T3} {Y};
    \edge {T4} {Y};
    \edge {T5} {Y};
    \edge {C1} {T1};
    \edge {C1} {T2};
    \edge {C1} {T3};
    \edge {C1} {T4};
    \edge {C1} {T5};
    \edge {C1} {Y};
    \edge {C2} {T1};
    \edge {C2} {T2};
    \edge {C2} {T3};
    \edge {C2} {T4};
    \edge {C2} {T5};
    \edge {T3} {T4}
    \edge {T2} {T3};
    
    \edge[information] {UB} {T1};
    \edge[information] {UB} {T2};
    \edge[information] {UB} {T3};
    \edge[information] {UB} {T4};
    \edge[information] {UB} {Y};
    \edge[information] {UA} {C1};
    \edge[information] {UA} {C2};
    \path (T1) edge[->, bend right=20] (T5);
    \path (T1) edge[->, bend left=30] (T3);
    \path (T3) edge[->, bend left=20] (T5);
    \path (C2) edge[->, bend left=30] (Y);
    \path (UB) edge[->, bend right=5, information] (T5);
  \end{influence-diagram}
  \caption{Example of $G$ where $V_A = \{C_1,C_2\}$ and $V_B = \{T_1,
      T_2, T_3, T_4, T_5, Y\}$} 
  \label{fig:ExampleSpecialClass}
  
\end{figure}

\section{Bounds with Additional Observations}
\label{sec:general,observations}
So far, we have allowed only observations of \(V_A\) variables to provide context in a query. In this section, we show how to incorporate additional observed variables from \(V_B\) in the query. We show how to efficiently compute bounds for the 
counterfactual query $\mathcal{Q}_{W} = \mathbb{P}\big(V_{O}(V_{I} = q_I) = q_O|V_{A} = q_{A}, V_W = q_W \big)$ where \(V_{W} \subseteq V_{B}\). 
Define 
\begin{align*}
  R_{\mcal{W}} &= \left\{r: F_{W}(V_{A} = q_{A}, r) = q_{W}\right\}.
\end{align*}
 to be the set of $r$ values that are consistent with the observation $(V_A, V_W)
 = (q_A, q_W)$. Note that since $V_{W}$ topologically follow $V_A$, the function \(F_{W}(V_{A} = q_{A}, r)\), and thus the set $R_{\mcal{W}}$, are well defined. 

Let \(\mcal{Q} = \mathbb{P}(V_O(V_I = q_I)=q_O| V_A = q_A)\). The
revised objective is then given by:    
\begin{eqnarray}
  \lefteqn{\mathbb{P}(V_{O}(V_{I} = q_I) = q_O|V_A = q_A, V_{W} = q_{W})}
  \hspace*{2in} \nonumber\\[0.5em]
 && \hspace*{-1.5in} =   \frac{\mathbb{P}(\{V_{O}(V_{I} = q_I, V_A = q_A) = q_O\} \cap \{V_{W}(V_{A} = q_{A}) = q_{W}\})}{\mathbb{P}(V_{W}(V_{A} = q_{A}) = q_{W})} \label{eq:fracObj1}\\ 
  && \hspace*{-1.5in}= \frac{\sum_{r \in R_{\mathcal{Q}}\cap R_{\mcal{W}}}
      q_{r}}{\sum_{r \in R_{\mcal{W}}} q_{r}} \label{eq:fracObj2}
\end{eqnarray}
where \eqref{eq:fracObj1} follows from Bayes' rule and \eqref{eq:fracObj2}
follows from the definitions of $R_{\mathcal{Q}}$ and
$R_{\mcal{W}}$. Note that our objective is now a \emph{fractional
  linear} expression in $q_{r}$, while our constraints remain the same.
However, since the fractional objective is always non-negative and
bounded, the fractional LPs can be reformulated into the following
LPs (\cite{charnes1962programming}):
\begin{equation}
  \begin{array}{rll}
    \min_{\alpha,q} / \max_{\alpha,q} \ & \sum_{r \in R_{\mathcal{Q}} \cap R_{\mcal{W}}} q_{r}\\
    \text{s.t.}\ & \sum_
    {r \in R_{v_B.v_A}} q_{r} - \alpha
                   p_{v_B.v_A} = 0, & \forall (v_A,v_B) \in
                   \{0,1\}^{|A|} \times \{0,1\}^{|B|},\\ 
    & \sum_{r \in R_{\mcal{W}}} q_{r} = 1, \\
    & \alpha, q\geq 0,
  \end{array}
  \label{eq:fracPrimalgeneral}
\end{equation}
For the causal 
graph in Figure \ref{fig:causalgraph} and counterfactual \(\mathbb{P}(Y(X=1)=1|X=0, Z=1)\),
the set 
\(R_{\mcal{W}} = \{(r_X, r_Y): f_{r_X}(1) = 0\}\), and 
the reformulated LPs are given by:
\begin{equation}
  \begin{array}{rll}
    \min_{\alpha,q} / \max_{\alpha,q} \ & \sum_{r \in R_{\mathcal{Q}} \cap R_{\mcal{W}}} q_{r}\\
    \text{s.t.}\ & \sum_
    {r \in R_{xy.z}} q_{r} - \alpha
                   p_{xy.z} = 0, & \forall (x,y,z) \in
                   \{0,1\}^{3} \\ 
    & \sum_{r \in R_{\mcal{W}}} q_{r} = 1, \\
    & \alpha, q\geq 0,
  \end{array}
  \label{eq:fracPrimal}
\end{equation}
where \(\mcal{Q} = \mathbb{P}(Y(X=1)=1|Z=1)\).
Next, we show that the LP~\eqref{eq:fracPrimal} can be reformulated in
terms of variables $q_h = \sum_{r \in R_h} q_r$ indexed by valid hyperarcs
$h \in H$. For that we need the following result.
\begin{restatable}[]{theorem}{lemmahobservations}
\label{thm:lemmahobservations}
For every valid hyperarc \(h \in H\), either \(R_h \subseteq
R_{\mcal{W}}\) or \(R_h \cap R_{\mcal{W}} = \emptyset\). 
\end{restatable}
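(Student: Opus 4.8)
The plan is to show that membership in $R_{\mcal{W}}$ is a property of the \emph{output} $h(q_A)$ of the hyperarc, and therefore is either shared by every $r \in R_h$ or by none. First I would fix a valid hyperarc $h \in H$ and take an arbitrary $r \in R_h$. By the definition of $R_h$ in \eqref{eq:simpleHyperarcDef}, we have $F_B(V_A = v_A, r) = h(v_A)$ for \emph{every} input $v_A \in \{0,1\}^{\abs{A}}$; in particular, evaluating at the specific input $v_A = q_A$ yields $F_B(V_A = q_A, r) = h(q_A)$.

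The key observation is that, since $V_W \subseteq V_B$, the value $F_W(V_A = q_A, r)$ is exactly the restriction of $F_B(V_A = q_A, r)$ to the coordinates indexed by $W$. Hence $F_W(V_A = q_A, r) = (h(q_A))_W$, the $W$-coordinates of the output $h(q_A)$. The crucial point is that the right-hand side depends only on $h$, and not on the particular $r \in R_h$ that was chosen. Consequently $F_W(V_A = q_A, r)$ takes one and the same value for all $r \in R_h$.

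With this in hand the dichotomy is immediate from the definition $R_{\mcal{W}} = \{r: F_W(V_A = q_A, r) = q_W\}$. If $(h(q_A))_W = q_W$, then every $r \in R_h$ satisfies the defining condition for $R_{\mcal{W}}$, so $R_h \subseteq R_{\mcal{W}}$. If instead $(h(q_A))_W \neq q_W$, then no $r \in R_h$ can belong to $R_{\mcal{W}}$, so $R_h \cap R_{\mcal{W}} = \emptyset$. These two cases are exhaustive and mutually exclusive, which establishes the claim.

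I do not anticipate a genuine technical obstacle here; the argument is essentially the single structural observation of the second paragraph. The only points requiring care are that $F_W(V_A = q_A, r)$ is well defined, which holds because $V_W$ topologically follows $V_A$ (as already noted when $R_{\mcal{W}}$ was introduced), and that validity of $h$, i.e.\ $R_h \neq \emptyset$, guarantees there is at least one $r$ to which the reasoning applies so that the dichotomy is non-vacuous. The heart of the matter is simply that evaluating $h$ at the fixed context $q_A$ already pins down the value of $V_W$, independently of the underlying response-function variable $r$.
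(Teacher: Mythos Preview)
Your proof is correct, and it is more direct than the paper's. The paper argues by invoking the characterizations in Theorem~\ref{thm:ChRN}: it observes that the condition ``there exists $v$ with $h(v_A)=v_B$, $v_{C(\mcal{W})}=q_{C(\mcal{W})}$, $v_W=q_W$'' (which by Theorem~\ref{thm:ChRN}\ref{thm:CChRN} gives $R_h\subseteq R_{\mcal{W}}$) and its logical negation ``for every such $v$, $v_W\neq q_W$'' (which by Theorem~\ref{thm:ChRN}\ref{thm:SChRN} gives $R_h\cap R_{\mcal{W}}=\emptyset$) are exhaustive. You instead go straight to the definitions: for any $r\in R_h$ one has $F_W(V_A=q_A,r)=(h(q_A))_W$, a quantity depending only on $h$, so membership in $R_{\mcal{W}}$ is constant across $R_h$. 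Your route avoids the forward reference to Theorem~\ref{thm:ChRN} and the machinery of critical variables $C(\mcal{W})$ altogether, yielding a self-contained two-line argument; the paper's route, while less economical here, reuses the same characterization lemmas that are genuinely needed later to compute $d_h^L$, $d_h^U$, and $\ones\{R_h\subseteq R_{\mcal{W}}\}$.
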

\begin{proof}
  Fix \(h \in H\). Suppose there exists \(v \in \{0,1\}^N\) such that
  \(h(v_A) = v_B\), \(v_{C(\mcal{W})} = q_{C(\mcal{W})}\) and \(v_{W} =
  q_{W}\). Then from Theorem 11\ref{thm:CChRN}, we have \(R_h \subseteq
  R_{\mcal{W}}\). 
  Instead, 
  suppose for every \(v \in \{0,1\}^N\)
  such that \(h(v_A) = v_B\) and \(v_{C(\mcal{W})} = q_{C(\mcal{W})}\), we have
  \(v_{W} \neq q_{W}\). Then, Theorem 11\ref{thm:SChRN} implies
  that 
  \(R_{h }\cap R_{\mcal{W}} = \emptyset\).
  Hence, either \(R_h
  \subseteq R_{\mcal{W}}\) or \(R_h \cap R_{\mcal{W}} = \emptyset\).
\end{proof}
Since \(R=\bigcup_h R_h\) is a partition of \(R\) (see Lemma \ref{lemma:partition}), it follows
that the constraint
\begin{equation*}
  \sum_{r \in R_{\mcal{W}}} q_r =
  \sum_{h \in H} \Big[\sum_{r \in
    R_{\mcal{W}} \cap R_{h}} q_r \Big] =
  \sum_{h \in H: R_h \subseteq R_{\mcal{W}}} \Big[\sum_{r \in R_{h}}
  q_r \Big] =  \sum_{h \in H: R_h \subseteq R_{\mcal{W}}} q_h
  \label{eq:rewriteRWthm12} 
\end{equation*}
where 
the second equality
follows from Theorem
\ref{thm:lemmahobservations}.  
The results in Section~\ref{sec:general,pruningLP} imply that 
for any $(v_A, v_B) \in \{0,1\}^{|A|} \times \{0,1\}^{|B|}$
\[
  \sum_{r \in R_{v_B.v_A}} q_r  =  \sum_{\{h \in H: h(v_A)  = v_B\}}
  \sum_{r \in R_h} q_r = \sum_{\{h \in H: h(v_A)  = v_B\}} q_h.
\]
Thus, the constraints can be written in terms of $q_h$. 
An argument identical to the one in
Section~\ref{sec:general,pruningLP} implies that the objective coefficient of $q_h$
in the minimization LP is given by 
\[
d^L_h := 
\ones\{R_h \subseteq R_{\mcal{Q}} \cap R_{\mcal{W}}\} = \ones\{R_h \subseteq R_{\mcal{W}}\} \ones\{R_h \subseteq
  R_{\mcal{Q}}\}
\]
Thus, the lower bound LP can be reformulated as follows. 
\begin{equation}
  \begin{array}{rll}
    \min_{q} \ & \sum_{h \in H} d_h^Lq_h\\
    \text{s.t.}\ & \sum_{h \in H: h(v_A) = v_B} q_{h} = 
                   \alpha p_{v_B.v_A},  &\forall v_A, v_B, \\
   \ & \sum_{h \in H: R_h \subseteq R_{\mcal{W}}} q_h = 1,\\
    & q, \alpha \geq 0,
  \end{array}
  \label{eq:prunedprimalobservationslower}
\end{equation}
Similarly, the 
upper bound LP 
can be reformulated as: 
\begin{equation}
  \begin{array}{rll}
    \max_{q} \ & \sum_{h \in H} d_h^Uq_h\\
    \text{s.t.}\ & \sum_{h \in H: h(v_A) = v_B} q_{h} = 
                   \alpha p_{v_B.v_A},  &\forall v_A, v_B, \\
   \ & \sum_{h \in H: R_h \subseteq R_{\mcal{W}}} q_h = 1,\\
    & q, \alpha \geq 0,
  \end{array}
  \label{eq:prunedprimalobservationsupper}
\end{equation}
where
\begin{equation}
  \label{eq:dUdef}
  d^U_h := 
  \ones\{R_h \cap (R_{\mcal{W}} \cap R_{\mcal{Q}}) \neq~\emptyset\} = 1
  - \ones\{R_h \cap 
  (R_{\mcal{W}} \cap R_{\mcal{Q}}) = \emptyset\}.
\end{equation}
The new LPs in terms of the $q_h$ variables
have \emph{exponentially} fewer variables since $\abs{H} \ll \abs{R}$;
however, they are useful only 
if, for each hyperarc \(h \in H\), $d^L_h$, $d^U_h$, and $\ones\{R_h
\subseteq R_{\mcal{W}}\}$ 
can be
efficiently computed.

\subsection{Efficiently computing $d^L_h$, $d^U_h$, and $\ones\{R_h
\subseteq R_{\mcal{W}}\}$}
\label{sec:pruningobservations}
We begin by defining the set of critical variables for the observation
$V_W$. 
\begin{definition}[Critical variables for observation \{\(V_W=q_W\)\}]
The set \(V_{C(\mcal{W})} \subseteq V_A\) of critical variables for the observation
 $\{V_W=q_W\}$ is defined as the set of variables in $V_A$ that have a
 path in the graph $G$  to some variable in $V_W$.
\end{definition}
Next, we define conditions under which $R_h \subseteq R_{\mcal{W}}$,
$R_h \cap R_{\mcal{W}}  = \emptyset$, and $R_h \cap
(R_{\mcal{Q}} \cap R_{\mcal{W}}) = \emptyset$.
\begin{restatable}[]{theorem}{CChRN}
  \label{thm:ChRN}
  Let $V_{C(\mcal{W})} \subseteq V_A$ denote the set of critical
  variables for the observation $\{V_W=q_W\}$. Then the following results hold. 
  \begin{enumerate}[(a)]
  \item 
    \(R_h \subseteq R_{\mcal{W}}\) if, and
    only if, there exists 
    \(v \in \{0,1\}^{|N|}\) such that \(h(v_A) = v_B\), $v_{C(\mcal{W})} =
    q_{C(\mcal{W})}$ and  
    $v_{W} = q_{W}$. 
    \label{thm:CChRN}
  \item 
    \(R_h \cap R_{\mcal{W}} = \emptyset\) if, and
    only if, there exists 
    \(v \in \{0,1\}^{|N|}\) such that \(h(v_A) = v_B\), 
    $v_{C(\mcal{W})} = q_{C(\mcal{W})}$ and 
    $v_{W} \neq q_{W}$. \label{thm:SChRN}
  \end{enumerate}
\end{restatable}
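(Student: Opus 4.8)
The plan is to follow the template of the proofs of Theorems~\ref{thm:CCh} and~\ref{SChyperarc}, but with the observation $\mcal{W}$ playing the role of the query. The key simplification is that $\mcal{W}$ involves no intervention: we only condition on $V_A = q_A$ and observe $V_W = q_W$. Consequently the two-case split that appears in those earlier proofs (arising from inputs $v_I \neq q_I$) collapses, and the only genuinely new ingredient I need is that the realized value of $V_W$ depends on the input $V_A$ only through its critical coordinates $V_{C(\mcal{W})}$.

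First I would establish this irrelevance fact. By definition, $V_{C(\mcal{W})}$ consists of exactly those variables in $V_A$ that have a directed path to some variable in $V_W$, so the $V_A$-ancestors of $V_W$ are precisely $V_{C(\mcal{W})}$. Evaluating the variables in topological order, each $V_j$ is, once $r$ is fixed, a deterministic function of its parents and hence recursively of its ancestors; therefore, for any fixed $r \in R$, the value $F_W(V_A = v_A, r)$ is unchanged when coordinates of $v_A$ outside $C(\mcal{W})$ are altered. The consequence I want is that for any valid $h \in H$, any $r \in R_h$, and any $v_A$ with $v_{C(\mcal{W})} = q_{C(\mcal{W})}$, we have $F_W(V_A = q_A, r) = F_W(V_A = v_A, r) = \big(h(v_A)\big)_W$; in particular $F_W(V_A = q_A, r) = \big(h(q_A)\big)_W$, a quantity that does not depend on which $r \in R_h$ is chosen (and $R_h \neq \emptyset$ since $h$ is valid).

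With this in hand, part (a) is immediate in both directions. If there exists $v$ with $h(v_A) = v_B$, $v_{C(\mcal{W})} = q_{C(\mcal{W})}$ and $v_W = q_W$, then $\big(h(q_A)\big)_W = \big(h(v_A)\big)_W = v_W = q_W$, so every $r \in R_h$ satisfies $F_W(V_A = q_A, r) = q_W$, i.e. $R_h \subseteq R_{\mcal{W}}$. Conversely, if $R_h \subseteq R_{\mcal{W}}$, then picking any $r \in R_h$ gives $\big(h(q_A)\big)_W = F_W(V_A = q_A, r) = q_W$, and taking $v_A = q_A$, $v_B = h(q_A)$ exhibits the required $v$. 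Part (b) follows by the same computation with the equality replaced by its negation: since $\big(h(q_A)\big)_W$ is common to all $r \in R_h$, either $\big(h(q_A)\big)_W = q_W$, giving $R_h \subseteq R_{\mcal{W}}$, or $\big(h(q_A)\big)_W \neq q_W$, giving $R_h \cap R_{\mcal{W}} = \emptyset$, and the latter is precisely the stated existence condition with $v_W \neq q_W$. This dichotomy is exactly what Theorem~\ref{thm:lemmahobservations} invokes, so it should drop out consistently.

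The hard part will be the irrelevance fact of the second paragraph: one must argue carefully that non-critical $V_A$ variables cannot influence $V_W$ even along indirect paths. I expect this to be the only step requiring care, and it is handled by the topological-order/ancestrality argument above; once it is in place, the remaining equivalences are purely formal and run in parallel to the earlier theorems.
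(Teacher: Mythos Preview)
Your proposal is correct and follows essentially the same approach as the paper: both arguments rest on the fact that, for any $r \in R_h$, the realized value of $V_W$ depends on the $V_A$ input only through the critical coordinates $V_{C(\mcal{W})}$, so the outputs of $h$ at any $v_A$ with $v_{C(\mcal{W})} = q_{C(\mcal{W})}$ determine whether $R_h$ sits inside or outside $R_{\mcal{W}}$. The paper leaves this ancestrality step implicit and argues the converse directions by contradiction, whereas you make the irrelevance fact explicit and give a direct construction (take $v_A = q_A$); your packaging via the dichotomy $\big(h(q_A)\big)_W = q_W$ versus $\big(h(q_A)\big)_W \neq q_W$ is slightly cleaner and indeed yields Theorem~\ref{thm:lemmahobservations} as a byproduct, but the substance is the same.
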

\begin{proof}
Suppose there exists $v \in \{0,1\}^{|N|}$ such that $h(v_A)=v_B$,
  $v_{C(\mcal{W})} = q_{C(\mcal{W})}$ and   
  $v_{W} = q_{W}$. Then every $r \in R_{h}$ maps
  $V_{C(\mcal{W})}=q_{C(\mcal{W})}$ to $V_{W}=q_{W}$, and thus, $r \in 
  R_{\mcal{W}}$ i.e. \(R_h \subseteq R_{\mcal{W}}\). To establish the
  reverse direction, suppose \(R_h \subseteq R_{\mcal{W}}\), but there does
  not exist $v \in  
  \{0,1\}^{|N|}$ such that $h(v_A) = v_B$, $v_{C(\mcal{W})} = q_{C(\mcal{W})}$ and 
$v_{W} = q_{W}$. Equivalently, for all $v
  \in \{0,1\}^{|N|}$ such that $h(v_A) = v_B$ and $v_{C(\mcal{W})} = q_{C(\mcal{W})}$, we have $v_{W} \neq q_{W}$. In this case,
  every $r \in 
  R_{h}$  maps $V_{C(\mcal{W})} = q_{C(\mcal{W})}$ to $V_{W} \neq q_{W}$, and therefore, $r \not 
  \in R_{\mcal{W}}$.   Hence $R_{h} \nsubseteq R_{\mcal{W}}$.

Suppose there exists $v \in \{0,1\}^{|N|}$ such that $h(v_A)=v_B$, $v_{C(\mcal{W})} = q_{C(\mcal{W})}$ and $v_{W} \neq q_{W}$. Then, every
$r \in R_{h}$ maps $V_{C(\mcal{W})}=q_{C(\mcal{W})}$ to $V_{W}\neq q_{W}$, and thus, $r \not \in R_{\mcal{W}}$ i.e. $R_{h} \cap
R_{\mcal{W}} = \emptyset$. To establish the opposite direction, suppose
\(R_h \cap 
R_{\mcal{W}} = \emptyset\), but there does not exist $v \in \{0,1\}^{|N|}$
such that $h(v_A) = v_B$, $v_{C(\mcal{W})} = q_{C(\mcal{W})}$ and $v_{W}
\neq q_{W}$. Equivalently, for all $v 
  \in \{0,1\}^{|N|}$ such that $h(v_A) = v_B$ and $v_{C(\mcal{W})} = q_{C(\mcal{W})}$, we have $v_{W} = q_{W}$. In this case, every
  $r \in 
    R_{h}$  maps $V_{C(\mcal{W})} = q_{C(\mcal{W})}$ to $V_{W} = q_{W}$, and therefore, $r  
    \in R_{\mcal{W}}$.   Hence $R_{h} \cap R_{\mcal{W}} \neq \emptyset$.
\end{proof}
If $R_{\mcal{Q}} \cap R_{\mcal{W}} = \emptyset$, 
the probability of 
both the observation and the query would be $0$. Hence, 
the observation would invalidate the query, and there 
would be no need for bounds. Therefore, we assume that $R_{\mcal{Q}} \cap
R_{\mcal{W}} \neq \emptyset$, and in that we have the following result.
\begin{restatable}{theorem}{thmfracupperbound}
\label{thm:fracupperbound}
Suppose  $R_{\mcal{Q}} \cap R_{\mcal{W}} \neq \emptyset$. Then $R_h \cap
(R_{\mcal{Q}} \cap R_{\mcal{W}}) = \emptyset$ if, and only if, either $R_h
\cap R_{\mcal{Q}} = \emptyset$ or $R_h \cap R_{\mcal{W}} = \emptyset$. 
\end{restatable}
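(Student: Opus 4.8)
The plan is to separate the two implications and observe that only one of them carries any content. The reverse implication is pure set theory and requires no structural input: if $R_h \cap R_{\mcal{Q}} = \emptyset$, then since $R_{\mcal{Q}} \cap R_{\mcal{W}} \subseteq R_{\mcal{Q}}$ I immediately obtain $R_h \cap (R_{\mcal{Q}} \cap R_{\mcal{W}}) \subseteq R_h \cap R_{\mcal{Q}} = \emptyset$, and the symmetric computation handles the case $R_h \cap R_{\mcal{W}} = \emptyset$. So I would dispose of this direction in a single line.

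The forward implication is where the structure of valid hyperarcs must enter, because for arbitrary sets the statement is false: $R_h$ could meet $R_{\mcal{Q}}$ and $R_{\mcal{W}}$ at disjoint points while avoiding their common part. The key is to invoke Theorem~\ref{thm:lemmahobservations}, which supplies the all-or-nothing dichotomy that for a valid hyperarc $h \in H$ either $R_h \subseteq R_{\mcal{W}}$ or $R_h \cap R_{\mcal{W}} = \emptyset$. I would then case-split on this dichotomy. In the case $R_h \cap R_{\mcal{W}} = \emptyset$ the second disjunct of the desired conclusion already holds, so there is nothing to prove. In the case $R_h \subseteq R_{\mcal{W}}$ I would use $R_h \cap R_{\mcal{W}} = R_h$ to rewrite $R_h \cap (R_{\mcal{Q}} \cap R_{\mcal{W}}) = R_h \cap R_{\mcal{Q}}$, so that the hypothesis $R_h \cap (R_{\mcal{Q}} \cap R_{\mcal{W}}) = \emptyset$ forces $R_h \cap R_{\mcal{Q}} = \emptyset$, yielding the first disjunct.

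The only real obstacle is recognizing that the entire content of the forward direction is packaged inside Theorem~\ref{thm:lemmahobservations}: once $R_h$ cannot straddle the boundary of $R_{\mcal{W}}$, the generic counterexample is excluded and both cases collapse to an elementary intersection computation. I also note that the standing assumption $R_{\mcal{Q}} \cap R_{\mcal{W}} \neq \emptyset$ is not actually used in either implication; it is carried only to ensure the query is well posed, so I would simply not appeal to it in the argument.
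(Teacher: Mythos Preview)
Your proof is correct but follows a different route from the paper's own argument. The paper does not invoke Theorem~\ref{thm:lemmahobservations}; instead, for the forward direction it argues by contradiction: assuming $R_h \cap R_{\mcal{Q}} \neq \emptyset$ and $R_h \cap R_{\mcal{W}} \neq \emptyset$, it applies Theorems~\ref{SChyperarc} and~\ref{thm:ChRN}(\ref{thm:SChRN}) to rule out the existence of certain witnesses $v^{\mcal{Q}}$ and $v^{\mcal{W}}$, and then explicitly constructs an $r \in R_h \cap R_{\mcal{Q}} \cap R_{\mcal{W}}$ by choosing response functions compatible with all three constraints simultaneously. The assumption $R_{\mcal{Q}} \cap R_{\mcal{W}} \neq \emptyset$ is genuinely used in that construction, to certify that the partial specifications imposed by membership in $R_{\mcal{Q}}$ and $R_{\mcal{W}}$ do not conflict with each other.

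Your approach is shorter and more conceptual: you recognize that the dichotomy of Theorem~\ref{thm:lemmahobservations} already encodes all the structural content needed, so the forward direction reduces to a two-line case split with no construction at all. A pleasant byproduct, as you observe, is that your argument does not require the hypothesis $R_{\mcal{Q}} \cap R_{\mcal{W}} \neq \emptyset$; the dichotomy on $R_{\mcal{W}}$ alone forces the conclusion. The paper's route has the minor advantage of being self-contained at the level of the characterizations in Theorems~\ref{SChyperarc} and~\ref{thm:ChRN}, without passing through the intermediate packaging of Theorem~\ref{thm:lemmahobservations}, but your argument is the more efficient one.
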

\begin{proof}
  Clearly, if either $R_h \cap R_{\mcal{Q}} =
  \emptyset$ or $R_h \cap R_{\mcal{W}} = \emptyset$, it follows that $R_h \cap
  (R_{\mcal{Q}} \cap R_{\mcal{W}}) = \emptyset$.
  
  Suppose $R_h \cap
  (R_{\mcal{Q}} \cap R_{\mcal{W}}) = \emptyset$, but $R_h \cap R_{\mcal{Q}}
  \neq \emptyset$ and $R_h \cap R_{\mcal{W}} \neq
  \emptyset$. By Theorem~\ref{SChyperarc} and Theorem~\ref{thm:SChRN}, there cannot
  exist either 
  \(v^{\mcal{W}} \in 
  \{0,1\}^{|N|}\) such that \(h(v_A^{\mcal{W}}) = v_B^{\mcal{W}}\),
  \(v_{C(\mcal{W})}^{\mcal{W}} = q_{C(\mcal{W})}\) and \(v_{W}^{\mcal{W}} \neq q_{W}\), or \(v^{\mcal{Q}} \in 
  \{0,1\}^{|N|}\) such that \(h(v_A^{\mcal{Q}}) = v_B^{\mcal{Q}}\), \(v_{A \cap C(\mcal{Q})}^{\mcal{Q}} = q_{A \cap C(\mcal{Q})}\),
\(v_{I}^{\mcal{Q}} = q_{I}\) and \(v_{O}^{\mcal{Q}} \neq q_{O}\). 

Since $R$ indexes the set of all possible functions, there
exists $r$ such that: 
\begin{enumerate}[(i)]
\item $r$ maps $V_A = v_A$ to $V_B = v_B$ for all $(v_A,v_B)$ such that
  $h(v_A) = v_B$ i.e. $r \in R_h$. \label{enum2:case1}
\item $r$ maps $V_{C(\mcal{W})} = q_{C(\mcal{W})}$ to $V_{W} = q_{W}$. This does not contradict
  \ref{enum2:case1} since there cannot exist \(v^{\mcal{W}} \in
\{0,1\}^{|N|}\) such that \(h(v_A^{\mcal{W}}) = v_B^{\mcal{W}}\), \(v_{C(\mcal{W})}^{\mcal{W}} = q_{C(\mcal{W})}\) and \(v_{W}^{\mcal{W}} \neq q_{W}\).  \label{enum2:case2}
\item $r$ maps \(V_{A \cap C(\mcal{Q})} = q_{A \cap C(\mcal{Q})}\) and $V_{I} = q_{I}$ to $V_{O} = q_O$. This does not contradict
  \ref{enum2:case1} since there cannot exist \(v^{\mcal{Q}} \in
\{0,1\}^{|N|}\) such that \(h(v_A^{\mcal{Q}}) = v_B^{\mcal{Q}}\), \(v_{A \cap C(\mcal{Q})}^{\mcal{Q}}=q_{A \cap C(\mcal{Q})}\), \(v_{I}^{\mcal{Q}} = q_{I}\) and \(v_{O}^{\mcal{Q}} \neq q_{O}\), and it does not contradict
  \ref{enum2:case2} since $R_{\mcal{Q}} \cap R_{\mcal{W}} \neq \emptyset$.    
\end{enumerate}

Hence, $R_h \cap (R_{\mcal{Q}} \cap R_{\mcal{W}}) \neq \emptyset$, a contradiction.
\end{proof}
Thus, we have established that $\ones\{R_h \subseteq R_{\mcal{W}}\}$ can
be efficiently 
computed using Theorem~\ref{thm:ChRN}~(\ref{thm:CChRN}),
$d_h^L = 
\ones\{R_h \subseteq R_{\mcal{W}}\} \ones\{R_h \subseteq
R_{\mcal{Q}}\}$ can  be efficiently computed using
Theorem~\ref{thm:ChRN}~(\ref{thm:CChRN}) and 
Theorem~\ref{thm:CCh}, and 
$d^U_h = 1 - \ones\{R_h \cap (R_{\mcal{W}} \cap R_{\mcal{Q}})
=~\emptyset\}$ 
can be efficiently
computed using Theorem~\ref{SChyperarc} and
Theorem~\ref{thm:ChRN}~(\ref{thm:SChRN}) provided $R_{\mcal{Q}} \cap
R_{\mcal{W}} \neq \emptyset$, i.e. the observation does not invalidate the query.
Algorithm~\ref{alg:ourprocedureobservations}
describes a
procedure that uses results established in
Section~\ref{sec:pruningobservations}
to efficiently construct pruned LPs
\eqref{eq:prunedprimalobservationslower} and
\eqref{eq:prunedprimalobservationsupper} without formulating the original
LPs or iterating over~\(R\).

\begin{algorithm}[t]
\noindent \textbf{Input:} (i) causal graph \(G\) (ii) query \(\mcal{Q}_W = \mathbb{P}(V_O(V_I=q_I)=q_O|V_A = q_A, V_{W} =q_{W})\) (iii) conditional probability distribution $p_{v_B.v_A} = \mathbb{P}(V_B =
v_B|V_A = v_A)$, for all $v_A, v_B$.

\textbf{Output:} Pruned LPs \eqref{eq:prunedprimalobservationslower} and \eqref{eq:prunedprimalobservationsupper}\\
\(H \gets \emptyset\)\\
\For{\(h: V_A \rightarrow V_B\)}{
\If{\(h\) is valid (Theorem~\ref{validityh})}{
$H \gets H \cup \{h\}$\\
Compute \(\ones\{R_h \subseteq R_{\mcal{W}}\}\) using Theorem \ref{thm:CChRN}\\
Compute \(d_h^L\) using Theorems \ref{thm:CCh} and
          \ref{thm:CChRN}\\
Compute \(d_h^U\) using Theorems
          \ref{SChyperarc} and \ref{thm:SChRN} 
}

}
\Return{LPs \eqref{eq:prunedprimalobservationslower} and \eqref{eq:prunedprimalobservationsupper} constructed using \((H, \{\ones\{R_h \subseteq R_{\mcal{W}}\}\}_{h \in H}, d^L, d^U)\)}

\caption{Procedure to efficiently construct LPs \eqref{eq:prunedprimalobservationslower}, \eqref{eq:prunedprimalobservationsupper}}
\label{alg:ourprocedureobservations}
\end{algorithm}

\section{Numerical Experiments}
\label{sec:numericals}
In this section, we report the results of our numerical experiments
validating and extending the benefits from the new methods proposed in
Sections~\ref{sec:general,pruningLP} 
and~\ref{sec:general,observations}. 

\subsection{Run time improvements}
In this section, we numerically verify the computational savings from
using the structural results in Sections~\ref{sec:general,pruningLP} 
and~\ref{sec:general,observations} compared to 
the benchmark methods that iterate over the set $R$ 
to compute the
pruned LPs.


In Table~\ref{tab:numericalexperiments} we report the results of the
experiments for constructing the pruned LPs \eqref{eq:prunedprimal} and
\eqref{eq:prunedPrimalDualUpper}. Here
$t_H$ denotes the time (in 
seconds) taken by 
Algorithm~\ref{alg:ourprocedure} in Section
\ref{sec:finalprocedure}, and $t_R$ denotes 
the time (in seconds) taken by Algorithm~\ref{alg:bechmarkR}, the
benchmark that iterates over the set $R$. 
The results clearly show the 
significant runtime  improvement 
provided by our method compared to the benchmark on the five examples. Note
that the benchmark methods were terminated after 1200 seconds. 

\begin{algorithm}[t]
\noindent \textbf{Input:} (i) Causal graph \(G\) (ii) Query \(\mcal{Q} = \mathbb{P}(V_O(V_I=q_I)=q_O|V_A =q_A)\) (iii) Conditional probability distribution $p_{v_B.v_A} = \mathbb{P}(V_B =
v_B|V_A = v_A)$, for all $v_A, v_B$\\
\textbf{Output:} Pruned LPs \eqref{eq:prunedprimal} and \eqref{eq:prunedPrimalDualUpper}\\
\(R_h \leftarrow \emptyset, \forall h: V_A \rightarrow V_B\)\\
\(c_h^L \leftarrow 1, c_h^U \leftarrow 0, \forall h: V_A \rightarrow V_B\)\\
\For{\(r \in R\)}{
Compute \(F_B(V_A = v_A,r), \forall v_A \in \{0,1\}^{|A|}\)\\
Compute \(F_O((V_{A},V_{I}) = (q_{A},q_{I}), r)\)\\
\(R_{\bar{h}}
\gets R_{\bar{h}} \cup \{r\}\) for the  hyperarc $\bar{h}: v_A \mapsto F_B(V_A = v_A, r)$\\
\If{\(c_{\bar{h}}^L = 1\) and \(F_O((V_{A},V_{I}) = (q_{A},q_{I}), r) = 0\)}{ \(c_{\bar{h}}^L \gets 0\) 
}
\If{\(c_{\bar{h}}^U = 0\) and \(F_O((V_{A},V_{I}) = (q_{A},q_{I}), r) = 1\)}{\(c_{\bar{h}}^U \gets 1\)}
}
\(H \gets \{h: R_h \neq \emptyset\}\)\\
\Return{LPs \eqref{eq:prunedprimal} and \eqref{eq:prunedPrimalDualUpper}
  constructed using \((H, c^L, c^U)\)}
 \caption{Benchmark R-based method to construct LPs \eqref{eq:prunedprimal}, \eqref{eq:prunedPrimalDualUpper}}

 \label{alg:bechmarkR}
\end{algorithm}

\begin{table}[t]
\begin{center}
\begin{tabular}{||c|c|c||} 
 \hline
 \multicolumn{1}{|c|}{Graph}&
 \multicolumn{1}{|c|}{\(t_H\) (s)}&
 \multicolumn{1}{|c|}{\(t_R\) (s)} \\ [0.5ex]

 \hline\hline
 Ex A & 2.0 & \(>1200\) \\
 \hline
 Ex B & 3.9 & 72.6 \\
 \hline
 Ex C & 3.7 & \(>1200\) \\
 \hline
 Ex F & 0.6 & \(>1200\) \\
 \hline
 Ex G & 2.4 & \(>1200\)\\
 \hline
\end{tabular}
\end{center}
\caption{
Runtime of 
Algorithm \ref{alg:ourprocedure} in Section \ref{sec:finalprocedure} 
to construct the pruned LPs \eqref{eq:prunedprimal} and
\eqref{eq:prunedPrimalDualUpper}
as compared to Algorithm~\ref{alg:bechmarkR}, the benchmark that iterates over the set $R$ for
Examples~A, B, C, F and G (details in
Appendix~\ref{causalinferenceprobs}). Here \(t_{H}\) (resp. $t_R$) denotes the 
runtime in seconds of our proposed method  (resp. iterating over $R$).}  
\label{tab:numericalexperiments}
\end{table}


In
Table~\ref{tab:numericalexperimentsobservations} we report the runtimes
for constructing the pruned LPs~\eqref{eq:prunedprimalobservationslower}
and \eqref{eq:prunedprimalobservationsupper}.  Here 
$t_H$ denotes the time
(in seconds) taken by 
Algorithm~\ref{alg:ourprocedureobservations} in
Section~\ref{sec:pruningobservations} and $t_R$ denotes the time (in 
seconds) taken by Algorithm~\ref{alg:bechmarkRobservations} that iterates
over the set $R$.  
The results
clearly show the significant runtime improvement provided by our method
compared to the benchmark on the five examples. Note that the benchmark
methods were terminated after 1200 seconds.

\begin{algorithm}[htp!]
\noindent \textbf{Input:} (i) causal graph \(G\) (ii) query \(\mcal{Q} = \mathbb{P}(V_O(V_I=q_I)=q_O|V_A= q_A, V_{W} =q_{W})\) (iii) conditional probability distribution $p_{v_B.v_A} = \mathbb{P}(V_B =
v_B|V_A = v_A)$, for all $v_A, v_B$

\textbf{Output:} Pruned LPs \eqref{eq:prunedprimalobservationslower} and \eqref{eq:prunedprimalobservationsupper}\\
\(H \leftarrow \emptyset\)\\
\(R_h \leftarrow \emptyset, \forall h: V_A \rightarrow V_B\)\\
\(c_h^L \leftarrow 1, c_h^U \leftarrow 0, \forall h: V_A \rightarrow V_B\)\\
\(d_h^L \leftarrow 1, d_h^U \leftarrow 0, \forall h: V_A \rightarrow V_B\)

\For{\(r \in R\)}{
Compute \(F_B(V_A = v_A,r), \forall v_A \in \{0,1\}^{|A|}\)\\
Compute \(F_O((V_{A},V_{I}) = (q_{A},q_{I}), r)\)\\
Compute \(F_{W}(V_{A} = q_{A}, r)\)\\
\(R_{\bar{h}}
\gets R_{\bar{h}} \cup \{r\}\) for the  hyperarc $\bar{h}: v_A \mapsto F_B(V_A = v_A, r)$\\
\If{\(c_{\bar{h}}^L = 1\) and \(F_O((V_{A},V_{I}) = (q_{A},q_{I}), r) = 0\)}{\(c_{\bar{h}}^L \leftarrow 0\)}
\If{\(c_{\bar{h}}^U = 0\) and \(F_O((V_{A},V_{I}) = (q_{A},q_{I}), r) = 1\)}{\(c_{\bar{h}}^U \leftarrow 1\)}
\If{\(d_{\bar{h}}^L = 1\) and \(F_{W}(V_{A} = q_{A}, r) = 0\)}{\(d_{\bar{h}}^L \leftarrow 0 \)}
\If{\(d_{\bar{h}}^U = 0\) and \(F_{W}(V_{A} = q_{A}, r) = 1\)}{\(d_{\bar{h}}^U \leftarrow 1 \)}
}
\(H \leftarrow \{h: R_h \neq \emptyset\}\)\\
\For{\(h \in H\)}{
\(\ones\{R_h \subseteq R_{\mcal{W}}\} \leftarrow d_h^L\)\\ 
\(d_h^L \leftarrow d_h^L \times c_h^L\)\\
\(d_h^U \leftarrow d_h^U \times c_h^U\)
}
\Return{LPs \eqref{eq:prunedprimalobservationslower} and \eqref{eq:prunedprimalobservationsupper} constructed using \((H, d^L, d^U, \{\ones\{R_h \subseteq R_{\mcal{W}}\}\}_{h \in H})\)}
 \caption{Benchmark R-based method to construct LPs \eqref{eq:prunedprimalobservationslower} and \eqref{eq:prunedprimalobservationsupper}}  
 \label{alg:bechmarkRobservations}
\end{algorithm}

\begin{table}[t]
\begin{center}
\begin{tabular}{||c|c|c||} 
 \hline
 \multicolumn{1}{|c|}{Graph}&
 \multicolumn{1}{|c|}{\(t_H\) (s)}&
 \multicolumn{1}{|c|}{\(t_R\) (s)} \\ [0.5ex]

 \hline\hline
 Ex A & 2.1 & \(>1200\) \\
 \hline
 Ex B & 4.5 & 74.4 \\
 \hline
 Ex C & 4.3 & \(>1200\) \\
 \hline
 Ex F & 0.7 & \(>1200\) \\
 \hline
 Ex G & 3.1 & \(>1200\)\\
 \hline
\end{tabular}
\end{center}
\caption{Runtime of 
Algorithm \ref{alg:ourprocedureobservations} in Section \ref{sec:pruningobservations} 
to construct the pruned LPs \eqref{eq:prunedprimalobservationslower} and
\eqref{eq:prunedprimalobservationsupper}
as compared to Algorithm~\ref{alg:bechmarkRobservations}, the benchmark that iterates over the set $R$ for
Examples~A, B, C, F and G (details in
Appendix~\ref{causalinferenceprobs}). Here \(t_{H}\) (resp. $t_R$) denotes the 
runtime in seconds of our proposed method  (resp. iterating over $R$).} 
\label{tab:numericalexperimentsobservations}
\end{table}


\subsection{Finite Data Setting}
\label{sec:finitedata}
In this section, we show how to extend the pruning procedure to LPs 
where 
the conditional probabilities
$p_{v_B.v_A}$ 
are estimated from finite amount of data, and therefore, 
are known only within 
a tolerance. 
Let 
\(\bar{p}\) denote a sample
estimate of \(p\). 
Suppose the  (unknown) true conditional
probability $p_{v_B.v_A} \in 
[\bar{p}_{v_B.v_A} - \delta, \bar{p}_{v_B.v_A} + \delta]$ with high 
confidence. 
The bounds in this setting can be computed by solving the following LPs:
\begin{equation}
  \begin{array}{rrll}
    \alpha_L^F / \alpha_U^F =
    & \min / \max_{q,p} \ & \sum_{r: r \in R_{\mcal{Q}}} q_r\\
    & \text{s.t.}\ & \sum_{r \in R_{v_B.v_A}} q_{r} \leq 
                     \bar{p}_{v_B.v_A}+\delta, & \forall v_A, v_B\\
    &&\sum_{r \in R_{v_B.v_A}} q_{r} \geq 
       \bar{p}_{v_B.v_A}-\delta, & \forall v_A, v_B\\ 
    && \sum_{r \in R} q_r = 1,\\
    && q \geq 0.                    
  \end{array}
  \label{eq:finitedataprimal}
\end{equation}
Note that the structure of the objective function and the constraints in
the $q$ variables is the same as those
in~\eqref{eq:generalprimal}. Therefore, the results 
in Section~\ref{sec:general,pruningLP} 
can be used 
to aggregate the $q$-variables in~\eqref{eq:finitedataprimal} and rewrite the
LPs in terms of variables corresponding to hyperarcs.
\begin{equation}
  \begin{array}{rrll}
    \alpha_L^F / \alpha_U^F =
    & \min / \max_{q,p} \ & \sum_{h \in H} c^{L/U}_h q_h\\
    & \text{s.t.}\ & \sum_{h: h(v_A) = v_B} q_{h} \leq
                     \bar{p}_{v_B.v_A}+\delta, & \forall v_A, v_B\\ 
    && \sum_{h: h(v_A) = v_B} q_{h} \geq \bar{p}_{v_B.v_A}-\delta, & \forall v_A, v_B\\
    && \sum_{h \in H} q_h = 1\\
    && q \geq 0.                    
  \end{array}
  \label{eq:finitedataprimal2}
\end{equation}
In Figure~\ref{fig:boundsdeltaexA} we plot the upper and lower bounds computed by solving \eqref{eq:finitedataprimal2} as a
function of $\delta$ for $10$ randomly generated instances of \(\bar{p}\)
in Example A. Note that as the uncertainty in \(p\) increases, the computed bounds become wider. Recall that  for this example 
optimal bounds can only be computed after pruning the LP. 
Furthermore, standard results in LP
duality~\citep{linearprogbertsimas}  imply that 
one can utilize
the optimal duals 
to identify 
\(\bar{p}_{v_B.v_A}\) values that impact the bounds most, and concentrate additional
measurement on these values. 
We would like to reiterate that the analysis presented here is only
possible because we were able to prune the benchmark $R$-based LP for
Example A.

\begin{figure}[t]
  \centering
  \includegraphics[height=3in]{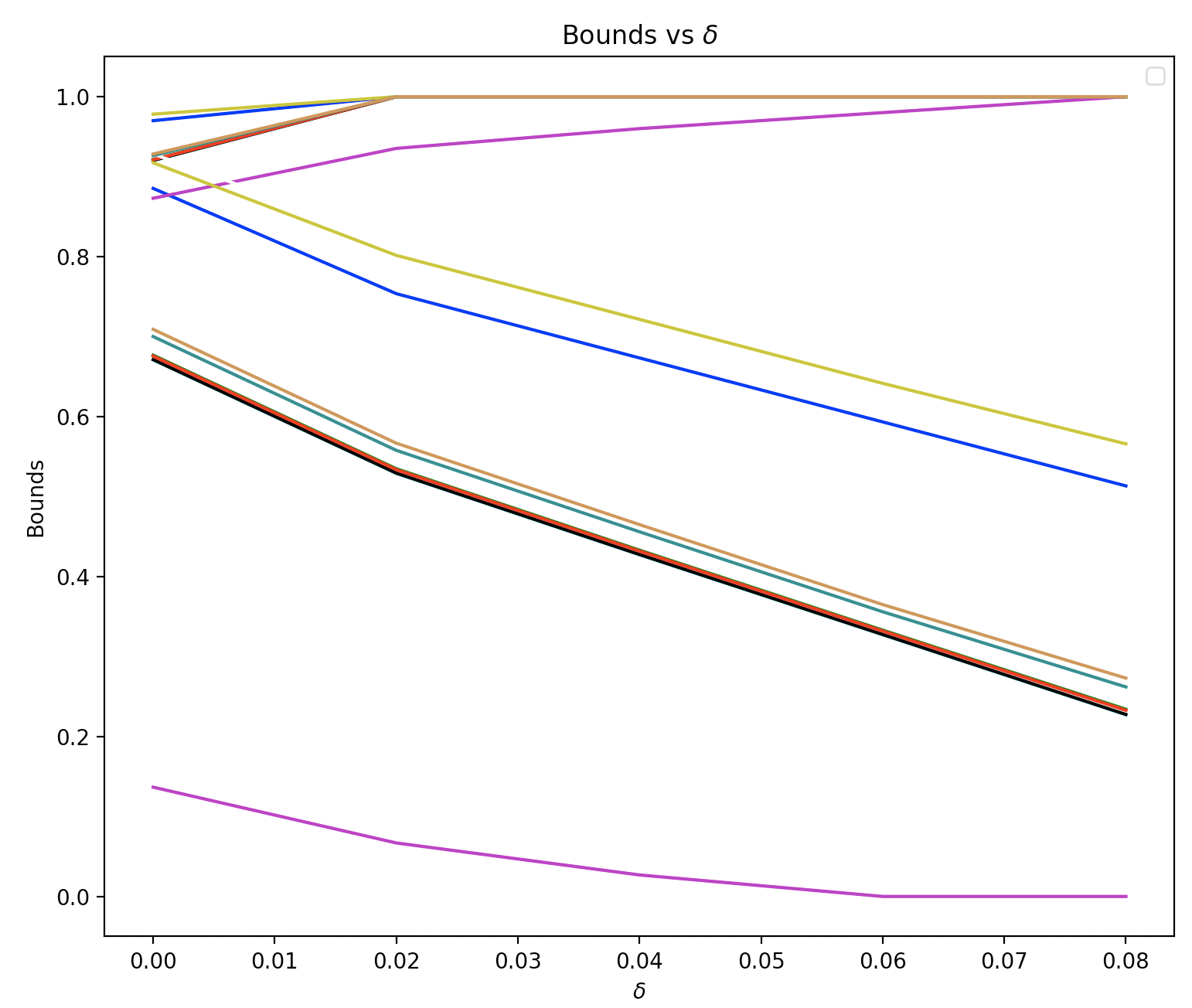}
  \caption{Lower and upper bounds vs \(\delta\) for 10 instances of
    \(\bar{p}\) in Example A. Lines of the same color denotes lower and
    upper bounds for the same instance.} 
  \label{fig:boundsdeltaexA}
\end{figure}


We now extend LPs~\eqref{eq:prunedprimalobservationslower} and
\eqref{eq:prunedprimalobservationsupper} to the finite data
setting. 
Bounds in
this setting can be computed by solving the following LPs: 
\begin{equation}
  \begin{array}{rll}
    \min_{q} / \max_{q} \
    & \frac{ \sum_{r \in
      R_{\mathcal{Q}} \cap R_{\mcal{W}}} q_{r}}{\sum_{r \in R_{\mcal{W}}} q_{r}}\\ 
    \text{s.t.}\ & \sum_{r \in R_{v_B.v_A}} q_{r} \leq \bar{p}_{v_B.v_A} +
                   \delta, & \forall (v_A,v_B), \\ 
    & \sum_{r \in R_{v_B.v_A}} q_{r}  \geq \bar{p}_{v_B.v_A} - \delta, & \forall (v_A,v_B),\\
    & \sum_{r \in R} q_{r} = 1,\  q\geq 0.
  \end{array}
  \label{eq:fracPrimalFiniteData}
\end{equation}
This fractional LP can be linearized as follows.
\begin{equation}
  \begin{array}{rll}
    \min_{\alpha,q} / \max_{\alpha,q} \
    &\sum_{r \in
      R_{\mathcal{Q}} \cap R_{\mcal{W}}} q_{r}\\ 
    \text{s.t.}\ & \sum_
    {r \in R_{v_B.v_A}} q_{r} \leq (\bar{p}_{v_B.v_A} + \delta)\alpha, & \forall (v_A,v_B),\\
    &\sum_
    {r \in R_{v_B.v_A}} q_{r} \geq (\bar{p}_{v_B.v_A} - \delta)\alpha, & \forall (v_A,v_B),\\
    & \sum_{r \in R} q_{r} = \alpha,  \\
    & \sum_{r \in R_{\mcal{W}}} q_{r} = 1, \\
    & \alpha, q \geq 0.
  \end{array}
  \label{eq:LinPrimalFiniteDataObservations}
\end{equation}
Note that this is a linear program in $(\alpha, q)$. The previous results
allow us to aggregate the $q$-variables
in~\eqref{eq:LinPrimalFiniteDataObservations} and rewrite the
LPs in terms of variables corresponding to hyperarcs.
\begin{equation}
  \begin{array}{rll}
    \min / \max_{q,p} \ & \sum_{h \in H} d^{L/U}_h q_h\\
    \text{s.t.}\ & \sum_{h: h(v_A) = v_B} q_{h} \leq
                   (\bar{p}_{v_B.v_A}+\delta)\alpha, & \forall  (v_A,
                                                       v_B),\\ 
    & \sum_{h: h(v_A) = v_B} q_{h} \geq (\bar{p}_{v_B.v_A}-\delta)\alpha, & \forall (v_A, v_B),\\
    & \sum_{h \in H} q_h = \alpha,\\
    & \sum_{h \in H: R_h \subseteq R_{\mcal{W}}} q_h = 1,\\
    & \alpha, p\geq 0.                    
  \end{array}
  \label{eq:finitedataprimal2observations}
\end{equation}
 Recall that the lower and upper bounds computed by solving LPs \eqref{eq:finitedataprimal2observations} incorporate additional observations of some variables in \(V_B\). In Figure~\ref{fig:boundswithobservationsdeltaexA} we plot these bounds as a
function of $\delta$ for the same 10 instances of \(\bar{p}\), but for Example A with additional observations. (details in Appendix B). We also replicate
Figure~\ref{fig:boundsdeltaexA} with bounds computed without these additional observations for comparison; note that our bounds have shifted significantly after the
observation. As in \cite{balke94}, we have numerically verified the distinction between causal inference for the entire population and the sub-population consisting of units consistent with the observation.

\begin{figure}[t]
  \centering
  \begin{subfigure}{0.45\textwidth}
    \centering
    \includegraphics[height=3in,width=\linewidth]{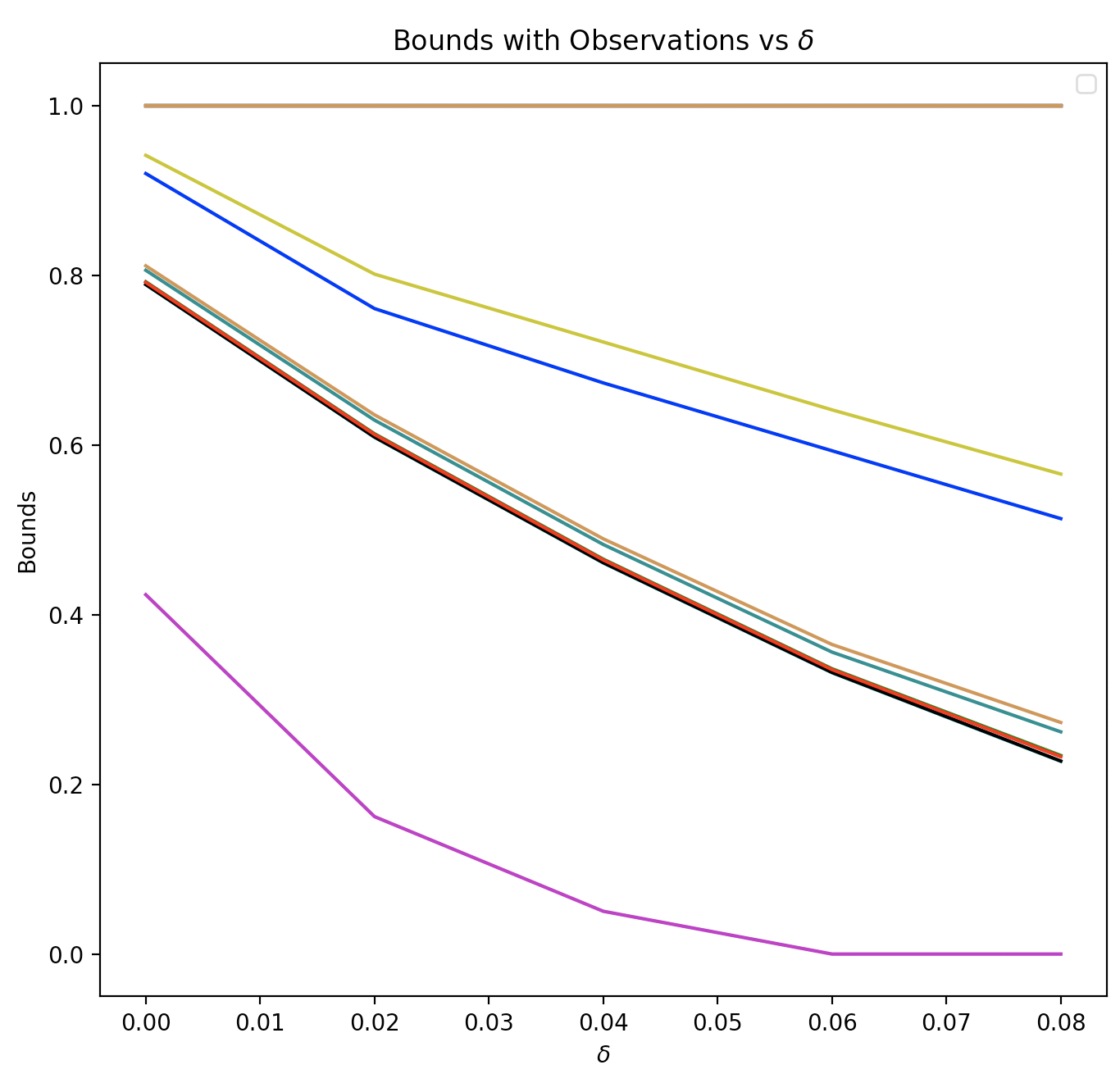}
    \caption{Lower and upper bounds with additional observations vs \(\delta\) for 10
      instances of \(\bar{p}\) in Example A. Lines of the same color
      denotes lower and upper bounds for the same instance.} 
    \label{fig:boundswithobservationsdeltaexA}  
  \end{subfigure}
  \hfill
  \begin{subfigure}{0.45\textwidth}
  \includegraphics[height=3in,width=\linewidth]{boundsnew.png}
  \caption{Lower and upper bounds without additional observations vs \(\delta\) for 10 instances of \(\bar{p}\) in Example A. Lines of the same color denotes lower and upper bounds for the same instance.}
  \label{fig:boundssdeltaexA}  
  \end{subfigure}
 \caption{Lower and upper bounds with and without additional observations of some variables in \(V_B\) vs \(\delta\) for 10 instances of \(\bar{p}\) in Example A (details in Appendix B). Note that our bounds have shifted after the observation.}
\end{figure}
\subsection{Greedy Heuristic}
\label{sec:greedy}
In this section, we propose a greedy heuristic (Algorithm~\ref{algo:greedy})
to approximately compute the bounds for  
problems \emph{without} additional observations when even the pruned LPs are intractable, and the bounds cannot
be computed in closed form since $A \not \subseteq C(\mcal{Q})$. Note that the
bounds computed by our heuristic are guaranteed to contain the optimal
bounds, and therefore, the true query value. 
This heuristic is motivated by the duals of
LPs \eqref{eq:prunedprimal} and~\eqref{eq:prunedPrimalDualUpper}
that are defined as follows:
\begin{align}
  \label{eq:prunedDualLower}
  \begin{split}
  \alpha_L &=
  \begin{array}[t]{rll}
  \max_{\lambda} \ & \sum_{(v_A,v_B)\in\{0,1\}^{|A|} \times \{0,1\}^{|B|}} p_{v_B.v_A} \lambda_{v_B.v_A} \\
  \mbox{s.t.} \ & \sum_{v_A \in \{0,1\}^{|A|}} \lambda_{h(v_A).v_A} \leq c_h^L , & 
                                                                          \forall h \in H.
  \end{array}
\end{split}
\end{align}
\begin{align}
  \label{eq:prunedDualUpper}
  \begin{split}
  \alpha_U &=
  \begin{array}[t]{rll}
  \min_{\lambda} \ & \sum_{(v_A,v_B)\in\{0,1\}^{|A|} \times \{0,1\}^{|B|}} p_{v_B.v_A} \lambda_{v_B.v_A} \\
  \mbox{s.t.} \ & \sum_{v_A\in \{0,1\}^{|A|}} \lambda_{h(v_A).v_A} \geq c_h^U , & 
                                                                          \forall h \in H.
  \end{array}
\end{split}
\end{align}
We utilize the fact that in our numerical experiments, we observed that
for both dual LPs, there was always an 
optimal solution that only took values in the set \(\{-1,0,1\}\), and
the fact that in the symbolic bounds introduced by
\citet{balke94} (see, also \cite{bareinboimtransfer2017, pearl2009causality,
  sjolanderbounds2014, sachs2021general}) the probabilities in the input data were combined using
coefficients taking values in  \(\{-1,0,1\}\). In fact, we 
expect
the following conjecture to be true.   

\begin{restatable}[Dual Integrality]{conjecture}{dualintegral}
  \label{conj:dualintegral}
 The dual LPs in (\ref{eq:prunedDualLower}) and (\ref{eq:prunedDualUpper}) have optimal solutions which only take values in $\{-1,0,1\}$.
\end{restatable}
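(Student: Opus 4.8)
The plan is to exploit the transportation structure of the primal LPs \eqref{eq:prunedprimal} and \eqref{eq:prunedPrimalDualUpper}, of which \eqref{eq:prunedDualLower} and \eqref{eq:prunedDualUpper} are the duals. First I would observe that the constraint matrix of these LPs is a $0/1$ matrix with columns indexed by the pairs $(v_A,v_B)$ and rows indexed by the valid hyperarcs $h\in H$, the entry in row $h$, column $(v_A,v_B)$ being $\ones\{h(v_A)=v_B\}$. Because $h$ is a \emph{function}, each row contains exactly one $1$ inside each of the $2^{\abs{A}}$ column-blocks $\{(v_A,v_B): v_B\in\{0,1\}^{\abs{B}}\}$. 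Equivalently, the primal is an \emph{axial multi-index transportation problem} on the grid of all maps $V_A\mapsto V_B$, in which the mass $q_h$ of each map is constrained by the one-coordinate marginals $\sum_{h:h(v_A)=v_B}q_h=p_{v_B.v_A}$, and \eqref{eq:prunedDualLower}--\eqref{eq:prunedDualUpper} are exactly its Kantorovich potentials $\lambda_{v_B.v_A}$.

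With this reformulation in hand, the natural first step is the base case $\abs{A}=1$. Here there are only two column-blocks and each row uses one column from each, so the constraint matrix is the (transpose of the) edge--vertex incidence matrix of a bipartite graph on $\{0,1\}^{\abs{B}}\sqcup\{0,1\}^{\abs{B}}$, hence totally unimodular. Since the right-hand sides $c^L_h,c^U_h\in\{0,1\}$ are integral, every vertex of the dual feasible region is integral. To strengthen integrality to the sharper $\{-1,0,1\}$ bound I would use the gauge freedom of the dual: shifting block $v_A$ by a constant $s_{v_A}$ with $\sum_{v_A}s_{v_A}=0$ leaves every constraint and the objective unchanged, since each $h$ meets one variable per block and $\sum_{v_B}p_{v_B.v_A}=1$. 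I would use this to normalise an optimal integral vertex and argue, using that each tight constraint equates a signed sum of block entries to some $c_h\in\{0,1\}$, that the entries can be forced into $\{-1,0,1\}$.

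The hard part will be the passage from $\abs{A}=1$ to general $\abs{A}$. For $2^{\abs{A}}\ge 3$, axial multi-index transportation is \emph{not} integral and the constraint matrix is not totally unimodular, so the argument above breaks down and fractional vertices exist. The conjecture must therefore be rescued by the special structure of the \emph{cost} vector rather than of the polytope alone. Concretely, Theorems~\ref{thm:CCh} and~\ref{SChyperarc} show that $c^L_h$ and $c^U_h$ depend on $h$ only through its values on inputs $v_A$ with $v_{A\cap C(\mcal{Q})}=q_{A\cap C(\mcal{Q})}$, and then only through the $I$- and $O$-coordinates of the output -- a very low-dimensional projection of $h$. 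My plan is to exploit this sparsity to collapse the effective dimension of the dual: partition the variables $\lambda_{v_B.v_A}$ into those that can appear in a tight cost-$1$ constraint and those that cannot, show the latter can be driven to $0$ by the gauge freedom, and thereby reduce any instance to a structured core on which either a restored total-unimodularity argument or a direct exchange/rounding argument applies.

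A useful consistency check, which I would establish first, is that the conjecture already holds in the closed-form regime $A\subseteq C(\mcal{Q})$: there Theorem~\ref{theorem:speciallower} exhibits optimal bounds whose coefficients on the data $p_{v_B.q_A}$ lie in $\{0,1\}$, so the corresponding dual solutions lie in $\{-1,0,1\}$, matching the classical Balke--Pearl symbolic bounds. The crux of a full proof -- and the reason the statement is stated as a conjecture -- is to show that the cost sparsity of Theorems~\ref{thm:CCh} and~\ref{SChyperarc} is genuinely strong enough to overcome the non-integrality of axial transportation when $\abs{A}\ge 2$; whether the valid-hyperarc structure alone guarantees a $\{-1,0,1\}$ optimum in every such instance is exactly the open point.
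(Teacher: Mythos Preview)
The statement you are attempting to prove is stated in the paper as a \emph{conjecture}, not a theorem: the authors provide no proof whatsoever. They motivate it only empirically, writing that ``in our numerical experiments, we observed that for both dual LPs, there was always an optimal solution that only took values in the set $\{-1,0,1\}$,'' and by analogy with the symbolic bounds of \citet{balke94}. So there is no proof in the paper to compare your proposal against.

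You clearly recognise this yourself in the final paragraph, where you identify the passage from $\abs{A}=1$ to $\abs{A}\ge 2$ as ``exactly the open point.'' Your outline is therefore best read as a research plan rather than a proof, and as such it is technically sound: the identification of the dual with the potentials of an axial multi-index transportation problem is correct; the total-unimodularity argument for $\abs{A}=1$ is valid (each row of the constraint matrix has exactly one nonzero per block, and with two blocks this is a bipartite incidence structure); the gauge freedom $\lambda_{v_B.v_A}\mapsto\lambda_{v_B.v_A}+s_{v_A}$ with $\sum_{v_A}s_{v_A}=0$ is genuine and does preserve both feasibility and the objective; and your observation that axial transportation loses integrality for three or more indices, so that any proof must exploit the special structure of $c^L,c^U$ given by Theorems~\ref{thm:CCh} and~\ref{SChyperarc}, correctly isolates the obstruction. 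The consistency check with the closed-form regime $A\subseteq C(\mcal{Q})$ is also apt. But none of this closes the gap for general $\abs{A}$, and neither does the paper --- it remains open.
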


\begin{table}[t]
\begin{center}
\begin{tabular}{||c|c|c|c|c||} 
 \hline
 \multicolumn{1}{|c|}{Graph}&
 \multicolumn{1}{|c|}{\shortstack{\small \% instances with 
 \\ \small $\alpha_L^G = \alpha_L$}}&
 \multicolumn{1}{|c|}{\shortstack{\small \% instances with 
\\ \small $\alpha_U^G = \alpha_U$}}&
\multicolumn{1}{|c|}{\shortstack{\small \% instances with 
\\ \small $\epsilon_L \leq 10\%$}}&
\multicolumn{1}{|c|}{\shortstack{\small \% instances with 
\\ \small $\epsilon_U \leq 10\%$}} \\ [0.5ex]

 \hline\hline
 Ex A & 100 & 100 & 100 &100\\
 \hline
 Ex B & 99 & 86 & 99 & 94\\
 \hline
 Ex C & 100 & 84 &100& 86\\
 \hline
 Ex F & 100 & 100 &100& 100\\
 \hline
 Ex G & 100 & 100 & 100 & 100\\
 \hline
\end{tabular}
\end{center}
\caption{
Quality of the bounds computed by the greedy algorithm over 100 instances of Examples~A, B, C, F and G (details in
    Appendix~\ref{causalinferenceprobs}). Here \(\alpha_L^G\) (resp. 
    \(\alpha_U^G\)) 
    denotes the lower (resp. upper) 
    bound 
    computed by the 
    greedy heuristic, \(\alpha_L\) (resp. 
    \(\alpha_U\)) 
    denotes the lower (resp. upper) 
    bound 
    computed by solving the LP, 
    $\epsilon_L =
    (\alpha_L-\alpha^G_L)/\alpha_L$ (resp. 
    $\epsilon_U = (\alpha^G_U - \alpha_U)/\alpha_U$) is a measure of
    suboptimality of the lower (resp. upper)  bound computed by the greedy heuristic.}
\label{tab:greedytable}
\end{table}

\begin{algorithm}[t]
\caption{Greedy Heuristic}

Let the permutation which sorts the conditional probabilities in descending order be $i_1,\ldots,i_{2^{|N|}}$.\;

\SetKwFunction{FLB}{GreedyLowerBound}
  \SetKwProg{Fn}{Function}{:}{}
  \Fn{\FLB{}}{
    Initialize $\lambda = -\ones$.\;
    
    \For{$j=1,..,2^{|N|}$}{

    \While{$\lambda$ is feasible}{ 

        $\lambda_{i_j} = \lambda_{i_j}+1$
    }
    }
    }

\SetKwFunction{FLB}{GreedyUpperBound}
  \SetKwProg{Fn}{Function}{:}{}
  \Fn{\FLB{}}{
    Initialize $\lambda = \ones$. \;
    
    \For{$j=1,..,2^{|N|}$}{

    \While{$\lambda$ is feasible}{ 

        $\lambda_{i_j} = \lambda_{i_j}-1$
    }
    }
    }
\label{algo:greedy}    
\end{algorithm}

We 
tested Algorithm~\ref{algo:greedy} on  
$100$ instances of
each of the examples in Appendix~\ref{causalinferenceprobs} 
for which bounds were not available in closed form. The results reported
in Table
\ref{tab:greedytable} 
are 
for Examples A, B, C, F and G for which the LP
can be solved. 
We see that bounds from the
greedy heuristic 
matches the LP bounds in most instances for these problems.
Recall that one
can compute the optimal bounds for Examples~A, F and G only after pruning the LP.
In Table~\ref{tab:greedytable}, $\epsilon_L = 1-\frac{\alpha^G_L}{\alpha_L}$ and
$\epsilon_U = \frac{\alpha^G_U}{\alpha_U}-1$ denote the relative errors of
the lower and upper bounds, respectively. 
We see that the lower bound is always within \(10\%\) of the true value,
whereas the upper bound is within \(10\%\) for at least \(86\%\) of the
cases. 
See Appendix for the empirical distribution function of errors. Furthermore, 
the greedy heuristic yields 
non-trivial bounds for Examples D and E, where the corresponding pruned LP
is too large to be solved to optimality. 
\section{Conclusion}
\label{sec:conclusion}
In this work, we compute bounds for the expected value of some outcome variables \(V_O\) if we intervene on variables \(V_I\), given the values of variables \(V_A\) are known, via linear programming. We show how to leverage structural properties of these LPs to significantly reduce their size. We also show how to construct these LPs efficiently. 
As a direct consequence of our results, bounds for causal queries can
be computed for graphs of much larger size. We show that there are
examples of causal inference problems for which bounds could be computed
only after the pruning we introduce. Our structural results also allow us
to characterize a set of causal inference problems for which the bounds
can be computed in closed form. This class includes as a special case
extensions of problems considered in the multiple confounded treatments 
literature~\citep{pmlr-v139-wang21c,ranganath2019multiple,  
  janzing2018,pmlr-v89-d-amour19a,tran2017implicit}. 
We show that bounds for queries containing additional observations about the unit can be computed by solving fractional
LPs~\citep{bitran1973linear}. These 
fractional LPs are special because the denominator is restricted to be
non-negative. This allows us to homogenize the problem into a LP with one
additional constraint, 
and extend the structural results 
obtained for queries without additional observations. We also show the 
significant runtime improvement provided by our methods compared to benchmarks in numerical experiments and extend the results to the finite data setting. Finally, for causal inference without additional observations, we propose a very efficient greedy heuristic that produces very high quality bounds, and scales to problems that are several orders of magnitude larger than those for which the pruned LPs can be solved.



\newpage
\appendix
\onecolumn
\section{Basic results}
\label{app:critical}

\begin{lemma}[Partition of $R$]
\label{lemma:partition}
\(R = \cup_{h \in H} R_h\) is a partition of \(R\).
\end{lemma}
\begin{proof}
Since $R_h \subseteq R$ for all $h \in H$, it follows that \(\cup_{h \in
  H} R_h \subseteq R\). 
Next, we show that $R \subseteq \cup_h R_h$. Fix $\bar{r} \in R$. Then $\bar{r} \in R_{h}$ for some hyperarc $h$ such that
$h(v_A) = F_B(v_A,
\bar{r})$ for all \(v_A \in \{0,1\}^{|A|}\). Thus, $R \subseteq \cup_h R_h$.

Next, suppose there exist $h_1 \neq h_2$ such that 
$R_{h_1} \cap R_{h_2} \neq \emptyset$. Then for all $r \in R_{h_1} \cap
R_{h_2}$, and all $v_A \in \{0,1\}^{\abs{A}}$, we have  $h_i(v_A) = F_B(v_A, r)$, $i = 1,2$. Thus, it follows that $h_1(v_A) = h_2(v_A)$ for all $v_A
\in \{0,1\}^{\abs{A}}$. A contradiction.  
\end{proof}

\begin{lemma}[Critical Intervention Variables]
  \label{lem:critical}
   Let $G^{do(V_I = q_I)}$ denote the mutilated graph after intervention
  \(do(V_I = q_I)\), i.e. variables $V_{I}$ no longer have any incoming arcs, and let \(V_{C(\mcal{Q})}\) denote the set of variables in $V_A \cup V_B$ that have a path to some variable in $V_O$ in  $G^{do(V_I = q_I)}$.Then 
  \[\mathbb{P}(V_O(V_{I} = q_{I})=q_O|V_A = q_A) = \mathbb{P}(V_O(V_{I \cap C(\mcal{Q})} = q_{I \cap C(\mcal{Q})})=q_O|V_A = q_A)\] 
\end{lemma}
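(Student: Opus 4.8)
The plan is to prove the statement first at the level of response functions and then lift it to probabilities. Concretely, I would fix the contextual values $V_A = q_A$ and an arbitrary response function variable $r \in R$, and show that the induced value of $V_O$ is identical under the two interventions, i.e. $F_O((V_A, V_I) = (q_A, q_I), r) = F_O((V_A, V_{I \cap C(\mcal{Q})}) = (q_A, q_{I \cap C(\mcal{Q})}), r)$. Since conditioning on $V_A = q_A$ is common to both queries, and each conditional probability is just a sum of the masses $q_r$ over the $r$'s consistent with the respective query (with a common normalizer from $V_A = q_A$), this pointwise equality immediately yields equality of the two conditional probabilities.

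The heart of the argument is an induction over the topological order establishing that every critical variable receives the same value under both interventions. I would process $V_1, \ldots, V_n$ in topological order, with the parent-free $V_A$ roots (fixed to $q_A$ in both computations) forming the base case. For a critical variable $V_j \in V_{C(\mcal{Q})}$ there are two cases. If $V_j \in V_{I \cap C(\mcal{Q})}$, then it is overridden to the same value $q_j$ in both counterfactuals. If $V_j \in V_{C(\mcal{Q})} \setminus V_I$, then it is computed via its structural equation $f_j(pa(V_j), U_B)$ using the full, uncut parent set in both mutilated graphs, so it suffices to know that its parents agree.

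The key structural fact driving the induction is that every parent of a critical, non-intervened variable is itself critical: if $V_j \in V_{C(\mcal{Q})} \setminus V_I$ then its incoming arcs survive in $G^{do(V_I = q_I)}$, so for any $V_k \in pa(V_j)$ the arc $V_k \to V_j$ is present, and concatenating it with a path $V_j \to \cdots \to V_O$ exhibits $V_k$ as critical. Consequently the value of each critical variable depends only on earlier critical variables (and the fixed $V_A$), so the inductive hypothesis applies and $V_j$ matches in both computations. Because $V_O \subseteq V_{C(\mcal{Q})}$ (each output node trivially reaches itself) and $V_O \cap V_I = \emptyset$ by Assumption~\ref{ass:validQuery}, the output nodes fall into the second case and inherit equal values, which closes the pointwise claim.

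I expect the main obstacle to be making precise why the non-critical intervention variables cannot leak into the value of $V_O$, even though restoring their incoming arcs (when we only intervene on $V_{I \cap C(\mcal{Q})}$) could a priori create new directed paths toward $V_O$. The topological induction is designed to sidestep this worry entirely: rather than comparing ancestor sets across the two mutilated graphs, it shows directly that the recursion defining the critical variables never references a non-critical variable, so any discrepancy on the non-critical coordinates is simply never consulted when computing $V_O$. I would therefore state the parent-criticality fact cleanly and up front, since it is exactly what guarantees the recursion stays within the critical set.
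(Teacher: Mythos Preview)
Your proposal is correct and follows essentially the same idea as the paper's proof: both rest on the fact that variables in $V_{I \setminus C(\mcal{Q})}$ have no directed path to $V_O$ in $G^{do(V_I=q_I)}$ and hence cannot influence the outcome. The paper simply asserts this in one line, whereas you give a rigorous pointwise argument at the response-function level via topological induction on the critical set (using the parent-criticality observation to keep the recursion confined to $V_{C(\mcal{Q})}$); this is more detailed but conceptually the same, and your explicit handling of the ``leakage'' worry when arcs into $V_{I\setminus C(\mcal{Q})}$ are restored fills a gap the paper glosses over.
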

\begin{proof}
By definition, for every \(V \in V_{I \setminus C(\mcal{Q})}\), there is no directed path from \(V\) to a variable in \(V_O\) in \(G^{do(V_I = q_I)}\). It thus follows that they do not influence the value of \(V_O\) in the intervention. It thus follows that \(\mathbb{P}(V_O(V_{I} = q_{I})=q_O|V_A = q_A) = \mathbb{P}(V_O(V_{I \cap C(\mcal{Q})} = q_{I \cap C(\mcal{Q})}) = q_O|V_A = q_A)\).
\end{proof}

\section{Examples of Causal Inference Problems}
\label{causalinferenceprobs}
In this section, we report the causal graph structure and the data
generation process for the \(5\) examples in Table~\ref{tab:sizetable}.

\begin{figure}[htp!]
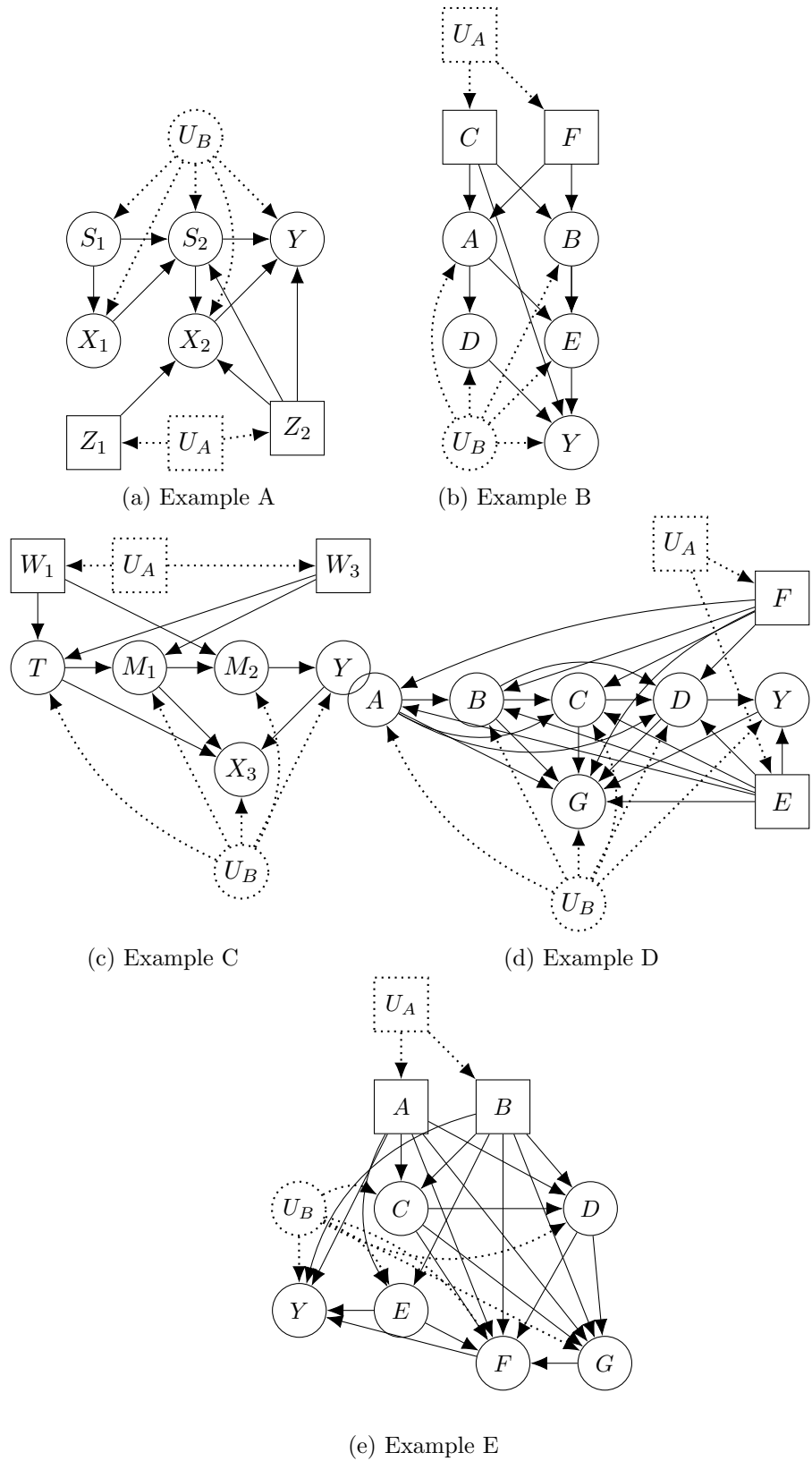


  \begin{subfigure}{.3\textwidth}
    \centering
    \begin{influence-diagram}
      \node (S1) [] {$S_1$};
      \node (S2) [right = of S1] {$S_2$};
      \node (X1) [below = of S1] {$X_1$};
      \node (Y) [right = of S2] {$Y$};
      \node (X2) [below = of S2] {$X_2$};
      \node (Z1) [below = of X1, decision] {$Z_1$};
      \node (Z2) [below = 2 of Y, decision] {$Z_2$};
      \node (UA) [below = of X2, information, decision] {$U_A$};
      \node (UB) [above = of S2, information] {$U_B$};
      
      \edge {Z1} {X2};
      \edge {Z2} {Y};
      \edge {Z2} {X2};
      \edge {Z2} {S2};
      \edge {S1} {S2};
      \edge {S2} {Y};
      \edge {S2} {X2};
      \edge {S1} {X1};
      \edge {X1} {S2};
      \edge {X2} {Y};
      \edge[information] {UA} {Z1};
      \edge[information] {UA} {Z2};
      \edge[information] {UB} {S1};
      \edge[information] {UB} {X1};
      \edge[information] {UB} {S2};
      \edge[information] {UB} {Y};

      \path (UB) edge[information, ->, bend left=30] (X2);
    \end{influence-diagram}
    \caption{Example A}
    \label{fig:ExampleA}
  \end{subfigure}
  \begin{subfigure}{.3\textwidth}
    \centering
    \begin{influence-diagram}
      \node (A) [] {$A$};
      \node (B) [right = of A] {$B$};
      \node (C) [above = of A, decision] {$C$};
      \node (D) [below = of A] {$D$};
      \node (E) [below = of B] {$E$};
      \node (F) [above = of B, decision] {$F$};
      \node (Y) [below = of E] {$Y$};
      \node (UA) [above = of C, information, decision] {$U_A$};
      \node (UB) [below = of D, information] {$U_B$};
      \edge {F} {B};
      \edge {F} {A};
      \edge {C} {A};
      \edge {C} {B};
      \edge {A} {D};
      \edge {A} {E};
      \edge {B} {E};
      \edge {E} {Y};
      \edge {B} {E};
      \edge {D} {Y};
      \edge {C} {Y}
      \edge[information] {UA} {F};
      \edge[information] {UA} {C};
      \edge[information] {UB} {D};
      \edge[information] {UB} {E};
      \edge[information] {UB} {B};
      \edge[information] {UB} {Y};
      \path (UB) edge[information, ->, bend left=30] (A);
    \end{influence-diagram}
    \caption{Example B}
    \label{fig:ExampleB}
  \end{subfigure}
  \newline
  \begin{subfigure}{.3\textwidth}
    \centering
    \begin{influence-diagram}
      \node (T) [] {$T$};
      \node (M1) [right = of A] {$M_1$};
      \node (M2) [right = of M1] {$M_2$};
      \node (Y) [right = of M2] {$Y$};
      \node (W1) [above = of T, decision] {$W_1$};
      \node (W3) [above = of Y, decision] {$W_3$};
      \node (X3) [below = of M2] {$X_3$};
      \node (UA) [above = of B, information, decision] {$U_A$};
      \node (UB) [below = of X3, information] {$U_B$};
      \edge {T} {M1};
      \edge {M1} {M2};
      \edge {M1} {X3};
      \edge {M2} {Y};

      \edge {W1} {T};
      \edge {W1} {M2};
      \edge {W3} {T};
      \edge {W3} {M1};
      \edge {T} {X3};
      \edge {Y} {X3};
      \edge[information] {UA} {W1};
      \edge[information] {UA} {W3};
      \edge[information] {UB} {M1};
      \edge[information] {UB} {Y};
      \edge[information] {UB} {X3};
      \path (UB) edge[information, ->, bend left=20] (T);
      \path (UB) edge[information, ->, bend right=30] (M2);
    \end{influence-diagram}
    \caption{Example C}
    \label{fig:ExampleC}
  \end{subfigure}
  \begin{subfigure}{.5\textwidth}
    \centering
    \begin{influence-diagram}
      \node (T) [] {$A$};
      \node (M1) [right = of T] {$B$};
      \node (M2) [right = of M1] {$C$};
      \node (M3) [right = of M2] {$D$};
      \node (Y) [right = of M3] {$Y$};
      \node (W1) [below = of Y, decision] {$E$};
      \node (W3) [above = of Y, decision] {$F$};
      \node (X3) [below = of M2] {$G$};
      \node (UA) [above = 1.5 of M3, information, decision] {$U_A$};
      \node (UB) [below = of X3, information] {$U_B$};
      \edge {T} {M1};
      \edge {M1} {M2};

      \edge {M1} {X3};
      \edge {M2} {M3};
      \edge {M2} {X3};
      \edge {M3} {Y};
      \edge {M3} {X3};

      \edge {W1} {T};
      \edge {W1} {M2};
      \edge {W1} {M3};
      \edge {W1} {M1};
      \edge {W1} {Y};
      \edge {W1} {X3};

      \edge {W3} {M1};
      \edge {W3} {M2};
      \edge {W3} {M3};
      \edge {T} {X3};
      
      \edge {Y} {X3};

      \edge[information] {UA} {W1};
      \edge[information] {UA} {W3};
      \edge[information] {UB} {M1};
      \edge[information] {UB} {Y};
      \edge[information] {UB} {X3};
    
      \edge[information] {UB} {M3};
      \path (UB) edge[information, ->, bend left=20] (T);
      \path (UB) edge[information, ->, bend right=30] (M2);
      \path (W3) edge[->, bend right=10] (T);
      \path (W3) edge[->, bend right=20] (X3);

      \path (M1) edge[->, bend left=30] (M3);

      \path (T) edge[->, bend right=30] (M2);
     \path (T) edge[->, bend right=30] (M3);
      
    \end{influence-diagram}
    \caption{Example D}
    \label{fig:ExampleD}
  \end{subfigure}
  \centering
\small
  \begin{subfigure}{0.3 \textwidth}
    \centering
    \begin{influence-diagram}
      \node (A) [decision] {$A$};
      \node (B) [right = of A, decision] {$B$};
      \node (C) [below = of A] {$C$};
      \node (D) [right = 2 of D] {$D$};
      \node (E) [below = of C] {$E$};
      \node (F) [right = of E, below=3 of B] {$F$};
      \node (G) [right = of F] {$G$};
      \node (Y) [left = of E] {$Y$};
      \node (UA) [above = of A, information, decision] {$U_A$};
      \node (UB) [left = of C, information] {$U_B$};
      
      \edge {C} {F};
      \edge {C} {D};
      
      \edge {D} {F};
      
      \edge {D} {G};
      \edge {E} {F};
      \edge {G} {F};
      \edge {C} {G};
  
      \edge {A} {D};
      
      \edge {A} {F};
      \edge {A} {C};
      \edge {A} {G};
      \edge {B} {E};
      \edge {B} {F};
      \edge {B} {C};
      \edge {B} {D};
      \edge {B} {G};
    
      \edge {E} {Y};
      \edge {F} {Y};
     \edge {A} {Y};
      \edge[information] {UA} {A};
      \edge[information] {UA} {B};
      
      \edge[information] {UB} {G};
      \edge[information] {UB} {Y};
      \path (UB) edge[information, ->, bend left=30] (C);
      \path (UB) edge[information, ->, bend right=30] (D);
  
      \path (UB) edge[information, ->, bend left=20] (E);
      \path (UB) edge[information, ->, bend left=20] (F);
      \path (A) edge[->, bend right=30] (E);
    \path (B) edge[->, bend right=30] (Y);
      
    \end{influence-diagram}
    \caption{Example E}
    \label{fig:ExampleE}
  \end{subfigure}
  \caption{Examples of Causal Inference Problems}
  \label{fig:ExampleGraphs}
\end{figure}

\begin{figure}[htp!]
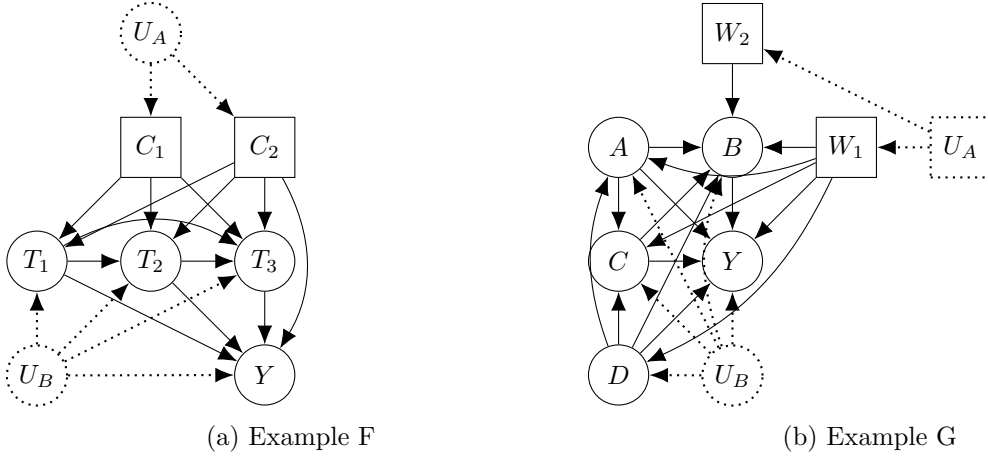

\small

  \begin{subfigure}{.5\textwidth}
    \begin{influence-diagram}
    \node (T1) [] {$T_{1}$};
    \node (T2) [right = of T1] {$T_{2}$};
    \node (T3) [right = of T2] {$T_{3}$};

    \node (Y) [below = of T3] {$Y$};
    \node (C1) [above = of T2, decision] {$C_1$};
    \node (C2) [above = of T3, decision] {$C_2$};
    \node (UB) [below = of T1, information] {$U_B$};
    \node (UA) [above = of C1, information] {$U_A$};
    
    \edge {T1} {Y};
    \edge {T2} {Y};
    \edge {T3} {Y};

    \edge {C1} {T1};
    \edge {C1} {T2};
    \edge {C1} {T3};

    \edge {C2} {T1};
    \edge {C2} {T2};
    \edge {C2} {T3};

   \edge{T1} {T2};
    \edge {T2} {T3};
    
    \edge[information] {UB} {T1};
    \edge[information] {UB} {T2};
    \edge[information] {UB} {T3};
    
    \edge[information] {UB} {Y};
    \edge[information] {UA} {C1};
    \edge[information] {UA} {C2};

    \path (T1) edge[->, bend left=30] (T3);
 
    \path (C2) edge[->, bend left=30] (Y);

  \end{influence-diagram}
    \caption{Example F}
    \label{fig:ExampleF}
  \end{subfigure}
  \small
  \begin{subfigure}{.5\textwidth}
    \begin{influence-diagram}
      \node (A) [] {$A$};
      \node (B) [right = of A] {$B$};
      \node (C) [below = of A] {$C$};
      \node (Y) [right = of C] {$Y$};

     \node (W1) [right = of B, decision] {$W_1$};
      \node (W2) [above = of B, decision] {$W_2$};
      \node (D) [below = of C] {$D$};
      \node (UA) [right = of W1, information, decision] {$U_A$};
      \node (UB) [below = of Y, information] {$U_B$};
      \edge {A} {B};
      \edge {C} {B};
      \edge {B} {Y};
      \edge {D} {Y};
      \edge {D} {B};
      \edge {D} {C};
      \edge {C} {Y};
      \edge {A} {Y};
      \edge {A} {C};
      \edge {W1} {B};
      \edge {W1} {Y};
      \edge {W1} {C};
      \edge {W2} {B};

      \edge[information] {UA} {W1};
      \edge[information] {UA} {W2};
      \edge[information] {UB} {A};
      \edge[information] {UB} {Y};
      \edge[information] {UB} {C};
      \edge[information] {UB} {D};
      \path (UB) edge[information, ->, bend left=20] (B);
      \path (D) edge[->, bend left=20] (A);
      \path (W1) edge[->, bend left=20] (A);
      \path (W1) edge[->, bend left=20] (D);

    \end{influence-diagram}
    \caption{Example G}
    \label{fig:ExampleG}
  \end{subfigure}
  
  \caption{Additional Causal Inference Problems}
  \label{fig:AdditionalExampleGraphs}
\end{figure}

\subsection*{Example A}
The causal graph for this example is display in
Figure~\ref{fig:ExampleA}. The query is: $P(Y(X_2=1)=1|Z_1=1,Z_2=1)$,
and data generating process used to generate the input data is given by
\begin{eqnarray*}
U_A &\sim& N(0,1)\\
U_B &\sim& N(0,1)\\
Z_1 &\sim& \text{Bernoulli}(\text{logit}^{-1}(U_A))\\
Z_2 &\sim& \text{Bernoulli}(\text{logit}^{-1}(U_A))\\
  S_1 &\sim& \text{Bernoulli}(\text{logit}^{-1}(U_B))\\
X_1 &\sim& \text{Bernoulli}(\text{logit}^{-1}(U_B+S_1))\\
S_2 &\sim& \text{Bernoulli}(\text{logit}^{-1}(S_1+U_B+X_1+Z_2))\\
X_2 &\sim& \text{Bernoulli}(\text{logit}^{-1}(S_2+U_B+Z_1+Z_2))\\
Y &\sim& \text{Bernoulli}(\text{logit}^{-1}(U_B + S_2+X_2+Z_2 ))
\end{eqnarray*}
After sampling $U_A, U_B,Z_1, Z_2$ we compute \\
\(
\mathbb{P}(Y, X_2,S_2,X_1,S_1|Z_2, Z_1) = \mathbb{P}(Y|U_B,S_2,X_2,Z_2)\mathbb{P}(X_2|S_2,Z_1,Z_2,U_B)\mathbb{P}(S_2|Z_2,U_B,S_1,X_1)\mathbb{P}(X_1|S_1,U_B)\\
\mathbb{P}(S_1|U_B)
\)\\
that gives input distribution. In Section \ref{sec:greedy}, we use the observation \(\{Z_1=1, Z_2=1, Y=1\}\).

\subsection*{Example B}
The causal graph for this example is in Figure~\ref{fig:ExampleB} and the query is: $P(Y(A=1,B=1)=1|C=1,F=1)$.
The data generating process used to generate the input information is as follows:
\begin{eqnarray*}
U_A &\sim& N(0,1)\\
U_B &\sim& N(0,1)\\
C &\sim& \text{Bernoulli}(\text{logit}^{-1}(U_A))\\
F &\sim& \text{Bernoulli}(\text{logit}^{-1}(U_A))\\
A &\sim& \text{Bernoulli}(\text{logit}^{-1}(C+F+U_B))\\
B &\sim& \text{Bernoulli}(\text{logit}^{-1}(C+F+U_B))\\
D &\sim& \text{Bernoulli}(\text{logit}^{-1}(A+U_B))\\
E &\sim& \text{Bernoulli}(\text{logit}^{-1}(A+B+U_B))\\
Y &\sim& \text{Bernoulli}(\text{logit}^{-1}(U_B+D+C+E))
\end{eqnarray*}
After sampling $U_A, U_B,C, F$ we compute
\[
\mathbb{P}(A,B,D,E,Y|C,F) =
\mathbb{P}(Y|U_B,D,C,E)\mathbb{P}(E|A,B,U_B)\mathbb{P}(D|A,U_B)\mathbb{P}(B|C,F,U_B)\mathbb{P}(A|C,F,U_B).
\]
that gives the input distribution. In Section \ref{sec:greedy}, we use the observation \(\{C=1, F=1, Y=1\}\).

\subsection*{Example C}
The causal graph for this example is in
Figure~\ref{fig:ExampleC} and the query is: $P(Y(M_1=1)=1|W_1=1,W_3=1)$. The data generating process used to generate the input information is as follows:

\begin{eqnarray*}
U_A &\sim& N(0,1)\\
U_B &\sim& N(0,1)\\
W_1 &\sim& \text{Bernoulli}(\text{logit}^{-1}(U_A))\\
W_3 &\sim& \text{Bernoulli}(\text{logit}^{-1}(U_A))\\
T &\sim& \text{Bernoulli}(\text{logit}^{-1}(W_1+W_3+U_B))\\
M_1 &\sim& \text{Bernoulli}(\text{logit}^{-1}(T+W_3+U_B))\\
M_2 &\sim& \text{Bernoulli}(\text{logit}^{-1}(M_1+W_1+U_B))\\
Y &\sim& \text{Bernoulli}(\text{logit}^{-1}(M_2+U_B))\\
X_3 &\sim& \text{Bernoulli}(\text{logit}^{-1}(U_B+Y+M_1+T))\\
\end{eqnarray*}
After sampling $U_A, U_B,W_1, W_3$ we compute

$\mathbb{P}(T,M_1,M_2,X_3,Y|W_1,W_3)=\mathbb{P}(X_3|U_B,Y,M_1,T)\mathbb{P}(Y|M_2,U_B)\mathbb{P}(M_2|M_1,W_1,U_B)\\\mathbb{P}(M_1|T,W_3,U_B)\mathbb{P}(T|W_1,W_3,U_B)$

that gives the input distribution. In Section \ref{sec:greedy}, we use the observation \(\{W_1=1, W_3=1, Y=1\}\).

\subsection*{Example D}
The causal graph for this example is in
Figure~\ref{fig:ExampleD} and the query is: $P(Y(D=1)=1|E=1,F=1)$. The data generating process used to generate the input information is as follows:
\begin{eqnarray*}
U_A &\sim& N(0,1)\\
U_B &\sim& N(0,1)\\
F &\sim& \text{Bernoulli}(\text{logit}^{-1}(U_A))\\
E &\sim& \text{Bernoulli}(\text{logit}^{-1}(U_A))\\
A &\sim& \text{Bernoulli}(\text{logit}^{-1}(F+E+U_B))\\
B &\sim& \text{Bernoulli}(\text{logit}^{-1}(E+F+A+U_B))\\
C &\sim& \text{Bernoulli}(\text{logit}^{-1}(B+E+F+A+U_B))\\
D &\sim& \text{Bernoulli}(\text{logit}^{-1}(B+E+F+A+C+U_B))\\
Y &\sim& \text{Bernoulli}(\text{logit}^{-1}(E+D+U_B))\\
G &\sim& \text{Bernoulli}(\text{logit}^{-1}(A+B+C+D+Y+E+F+U_B))\\
\end{eqnarray*}
After sampling $U_A, U_B,E, F$ we compute

$\mathbb{P}(G,Y,D,C,B,A|E,F)=\mathbb{P}(G|U_B,A,B,C,D,Y,E,F)\mathbb{P}(Y|E,D,U_B)\mathbb{P}(D|B,E,F,A,C,U_B)\\
\mathbb{P}(C|B,E,F,A,U_B)\mathbb{P}(B|E,F,A,U_B)\mathbb{P}(A|E,F,U_B)$

that gives the input distribution. In Section \ref{sec:greedy}, we use the observation \(\{E=1, F=1, Y=1\}\).

\subsection*{Example E}
The causal graph for this example is in
Figure~\ref{fig:AdditionalExampleGraphs} and the query is: $P(Y(C=1,F=1)=1|A=1,B=1)$. The data generating process used to generate the input information is as follows:
\begin{eqnarray*}
U_A &\sim& N(0,1)\\
U_B &\sim& N(0,1)\\
A &\sim& \text{Bernoulli}(\text{logit}^{-1}(U_A))\\
B &\sim& \text{Bernoulli}(\text{logit}^{-1}(U_A))\\
C &\sim& \text{Bernoulli}(\text{logit}^{-1}(A+B+U_B))\\
D &\sim& \text{Bernoulli}(\text{logit}^{-1}(A+C+B+U_B))\\
E &\sim& \text{Bernoulli}(\text{logit}^{-1}(A+B+U_B))\\
F &\sim& \text{Bernoulli}(\text{logit}^{-1}(A+C+B+D+E+G+U_B))\\
G &\sim& \text{Bernoulli}(\text{logit}^{-1}(U_B+A+B+C+D))\\
Y &\sim& \text{Bernoulli}(\text{logit}^{-1}(U_B+A+E+B+F))\\
\end{eqnarray*}
After sampling $U_A, U_B,A, B$ we compute\\
\(\mathbb{P}(C,D,E,F,G,Y|A,B)=\mathbb{P}(C|A,B,U_B)\mathbb{P}(D|A,C,B,U_B)\mathbb{P}(E|A,B,U_B)\mathbb{P}(F|A,C,B,D,E,G,U_B)\\
\mathbb{P}(G|A,B,C,D,U_B)\mathbb{P}(Y|U_B,A,E,B,F)
\)\\
that gives the input distribution. In Section \ref{sec:greedy}, we use the observation \(\{A=1, B=1, Y=1\}\).

\subsection*{Example F}
The causal graph for this example is in
Figure~\ref{fig:ExampleF} and the query is: $P(Y(T_1=1,T_2=1,T_3=1)=1|C_1=1,C_2=1)$. The data generating process used to generate the input information is as follows:
\begin{eqnarray*}
U_A &\sim& N(0,1)\\
U_B &\sim& N(0,1)\\
C_1 &\sim& \text{Bernoulli}(\text{logit}^{-1}(U_A))\\
C_2 &\sim& \text{Bernoulli}(\text{logit}^{-1}(U_A))\\
T_1 &\sim& \text{Bernoulli}(\text{logit}^{-1}(C_1+C_2+U_B))\\
T_2 &\sim& \text{Bernoulli}(\text{logit}^{-1}(C_1+C_2+U_B))\\
T_3 &\sim& \text{Bernoulli}(\text{logit}^{-1}(C_1+C_2+U_B))\\
Y &\sim& \text{Bernoulli}(\text{logit}^{-1}(U_B+T_1+T_2+T_3+C_2))\\
\end{eqnarray*}
After sampling $U_A, U_B,C_1, C_2$ we compute\\
\(\mathbb{P}(T_1,T_2,T_3|C_1,C_2)=\mathbb{P}(T_1|C_1,C_2,U_B)\mathbb{P}(T_2|C_1,C_2,U_B)\mathbb{P}(T_3|C_1,C_2,U_B)\mathbb{P}(Y|T_1,T_2,T_3,,C_2,U_B)\)
that gives the input distribution. In Section \ref{sec:greedy}, we use the observation \(\{C_1=1, C_2=1, Y=1\}\).

\subsection*{Example G}
The causal graph for this example is in
Figure~\ref{fig:ExampleG} and the query is: $P(Y(B=1)=1|W_1=1,W_2=1)$. The data generating process used to generate the input information is as follows:
\begin{eqnarray*}
U_A &\sim& N(0,1)\\
U_B &\sim& N(0,1)\\
W_1 &\sim& \text{Bernoulli}(\text{logit}^{-1}(U_A))\\
W_2 &\sim& \text{Bernoulli}(\text{logit}^{-1}(U_A))\\
D &\sim& \text{Bernoulli}(\text{logit}^{-1}(W_1+U_B))\\
A &\sim& \text{Bernoulli}(\text{logit}^{-1}(W_1+D+U_B))\\
C &\sim& \text{Bernoulli}(\text{logit}^{-1}(A+D+W_1+U_B))\\
B &\sim& \text{Bernoulli}(\text{logit}^{-1}(D+C+A+W_1+W_2+U_B))\\
Y &\sim& \text{Bernoulli}(\text{logit}^{-1}(U_B+D+C+A+B+W_1))\\
\end{eqnarray*}
After sampling $U_A, U_B,W_1, W_2$ we compute\\
\(\mathbb{P}(D,A,C,B,Y|W_1,W_2)=\mathbb{P}(D|W_1,U_B)\mathbb{P}(A|W_1,D,U_B)\mathbb{P}(C|A,D,W_1,U_B)\mathbb{P}(B|D,C,A,W_1,W_2,U_B)\)\\\(\mathbb{P}(Y|D,C,A,B,W_1,U_B)
\)\\
that gives the input distribution. In Section \ref{sec:greedy}, we use the observation \(\{W_1=1, W_2=1, Y=1\}\).










\newpage
\section{Empirical CDF for Error of Greedy Heuristic}
\label{ecdf}

\begin{figure}[htp!]
    \centering
    \begin{subfigure}{.33\textwidth}
    \includegraphics[height=1.6in, width=\textwidth]{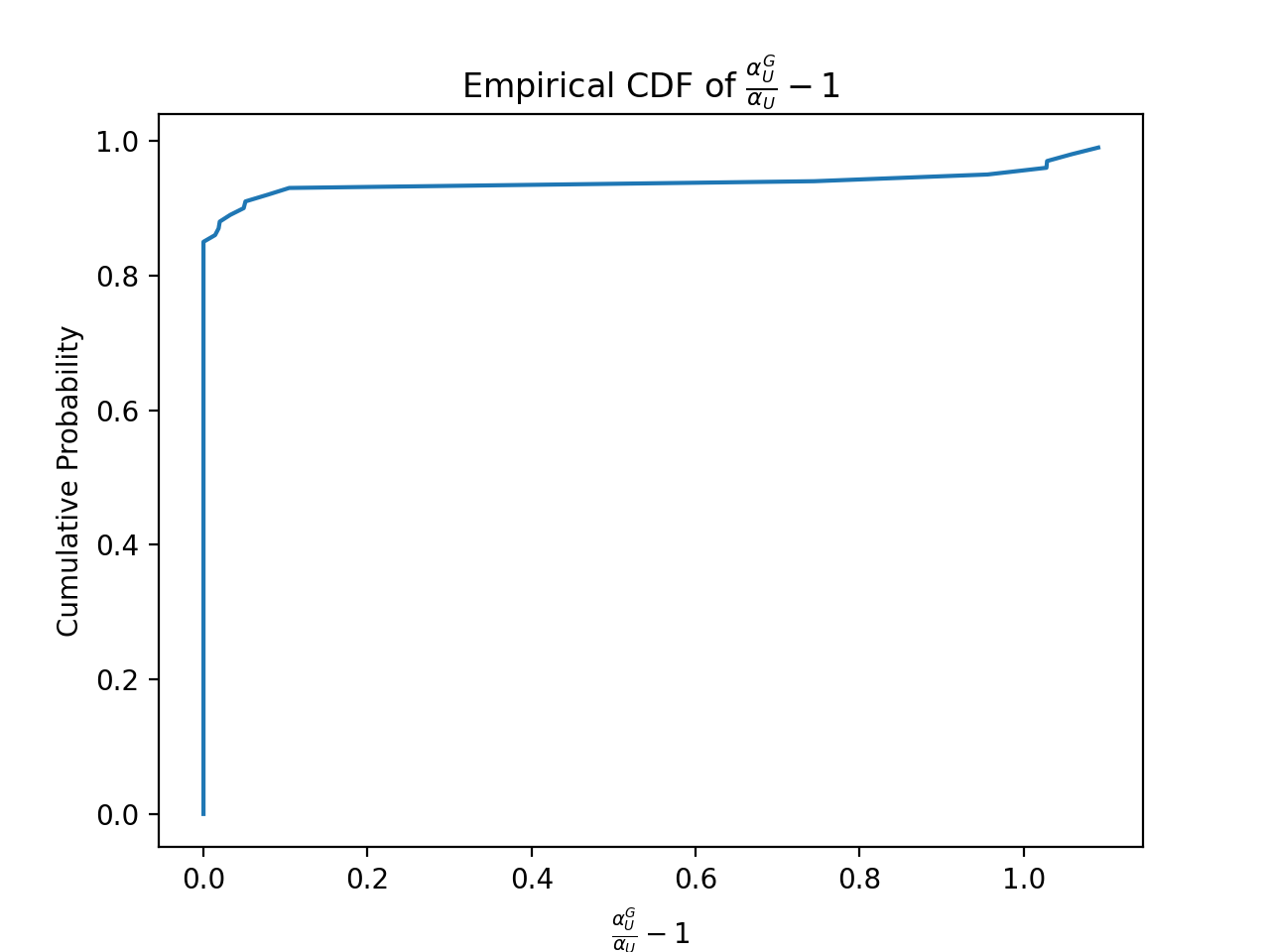}
    \caption{Empirical CDF of the Relative Error of \(\alpha_U\) for Example B}
    \end{subfigure}
    \begin{subfigure}{.33\textwidth}
    \includegraphics[height=1.6in, width=\textwidth]{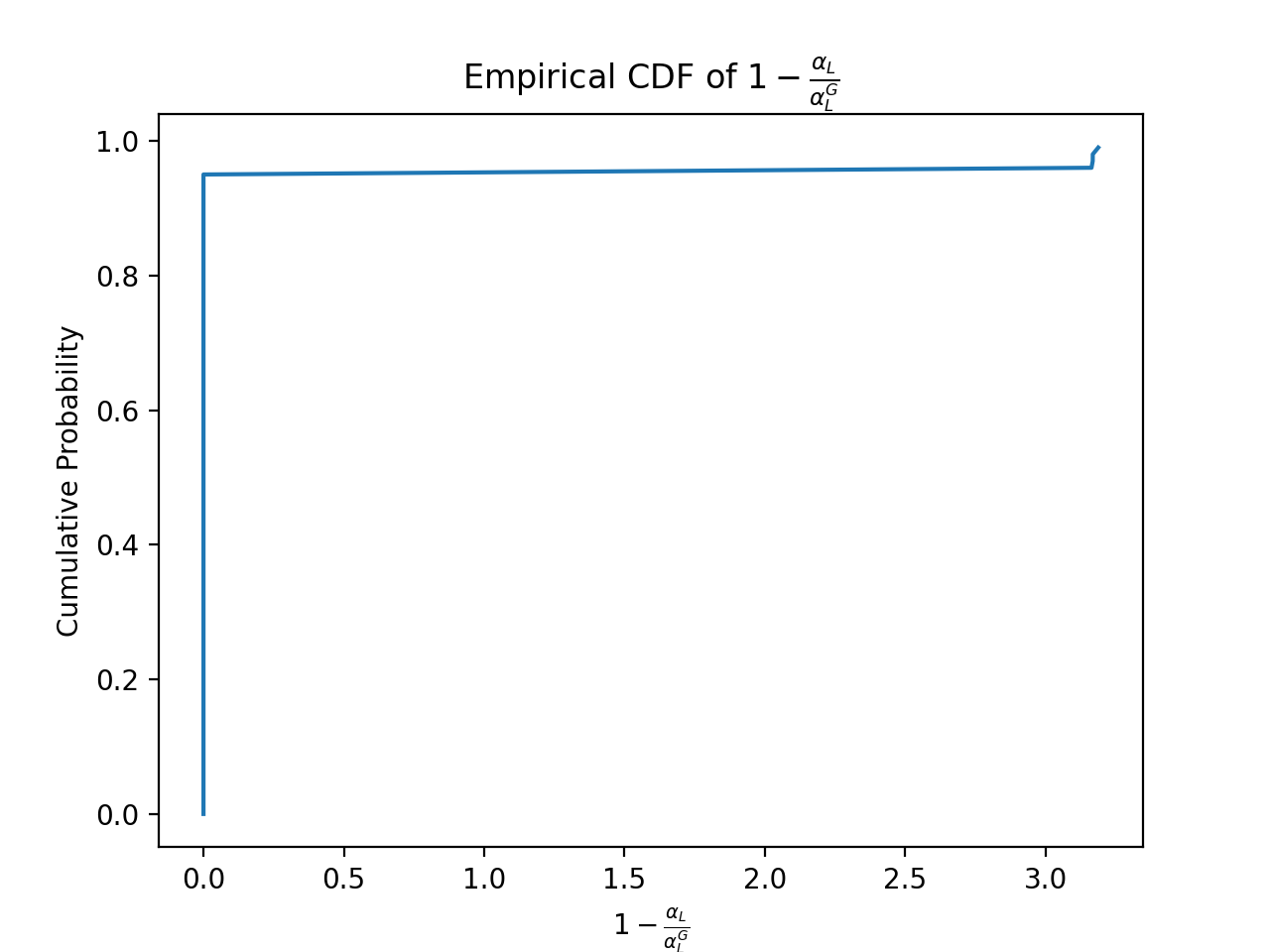}
    \caption{Empirical CDF for Relative Error of
      \(\alpha_L\) for Example B}
  \end{subfigure}
  \centering
    \begin{subfigure}{.33\textwidth}
    \includegraphics[height=1.6in,width=\textwidth]{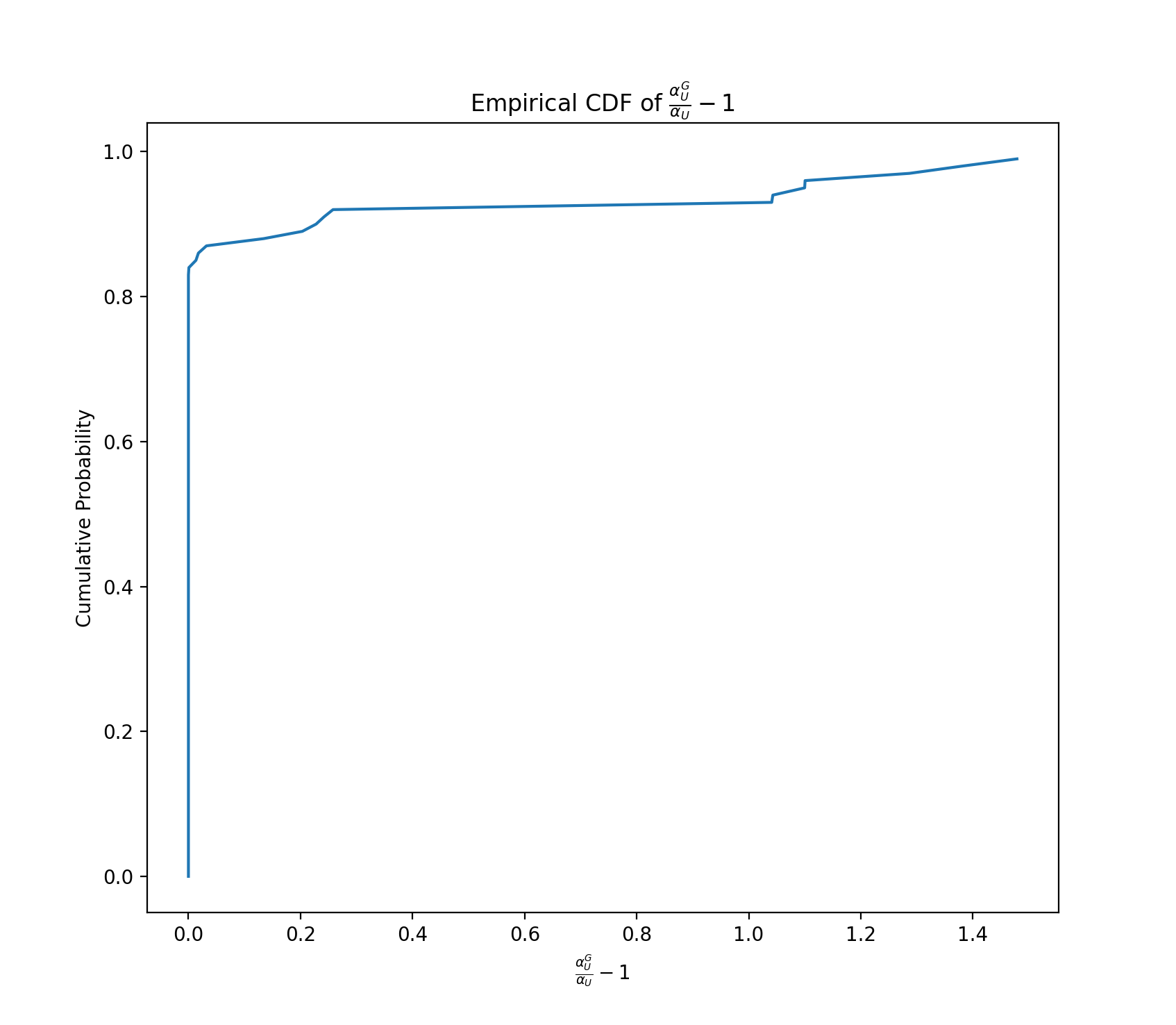}
    \caption{Empirical CDF of the Relative Error of \(\alpha_U\) for Example C}
    \end{subfigure}
 
\caption{Empirical Distribution Functions of Errors for Examples}
\end{figure}




\newpage
\bibliography{main}
\end{document}